\newtheorem{lemma}{Lemma}
\newtheorem{theorem}{Theorem}
\newtheorem{proposition}{Proposition}
\newcommand{\bestcell}[1]{\cellcolor{gray!30}\textbf{#1}}
\newcommand{\secondcell}[1]{\cellcolor{gray!18}\textbf{#1}}
\newcommand{\thirdcell}[1]{\cellcolor{gray!10}\textbf{#1}}
\title{Affordance-First Decomposition for Continual Learning in Video–Language Understanding}
\author{
Mengzhu Xu$^{\dagger}$ \\
University of Sydney
\And
Hanzhi Liu$^{\dagger}$ \\
University of California, Santa Barbara
\And
Ningkang Peng \\
Nanjing Normal University
\And
Qianyu Chen \\
Nanyang Technological University
\And
Canran Xiao$^{*}$ \\
Shenzhen Campus of Sun Yat-sen University
}
\begin{document}
\maketitle

\begin{abstract}
Continual learning for video--language understanding is increasingly important as models face non-stationary data, domains, and query styles, yet prevailing solutions blur what should stay stable versus what should adapt, rely on static routing/capacity, or require replaying past videos. 
We aim to explicitly specify where stability lives and where plasticity should be focused under realistic memory and privacy constraints. 
We introduce Affordance-First Decomposition (AFD): videos are mapped to slowly varying affordance tokens that form a shared, time-aligned substrate, while a lightweight, query-routed, conflict-aware scheduler concentrates adaptation and grows capacity only when needed. The substrate is stabilized via weak alignment and teacher consistency, and training uses question-only replay. 
AFD achieves state-of-the-art across protocols: 51.6\% average accuracy with $-1.8\%$ forgetting on domain-incremental VideoQA, ViLCo R@1@0.5 of 29.6\% (MQ) and 20.7\% (NLQ) with 18.4\% stAP@0.25 (VQ), and 39.5\% accuracy with $-1.6\%$ forgetting on time-incremental iVQA.
Overall, AFD offers an explicit, interpretable split between a stable interaction-centered substrate and targeted adaptation.
\end{abstract}


\section{Introduction}
\label{sec:intro}

\begin{figure}[htb]
	\centering
	\includegraphics[width=0.6\linewidth]{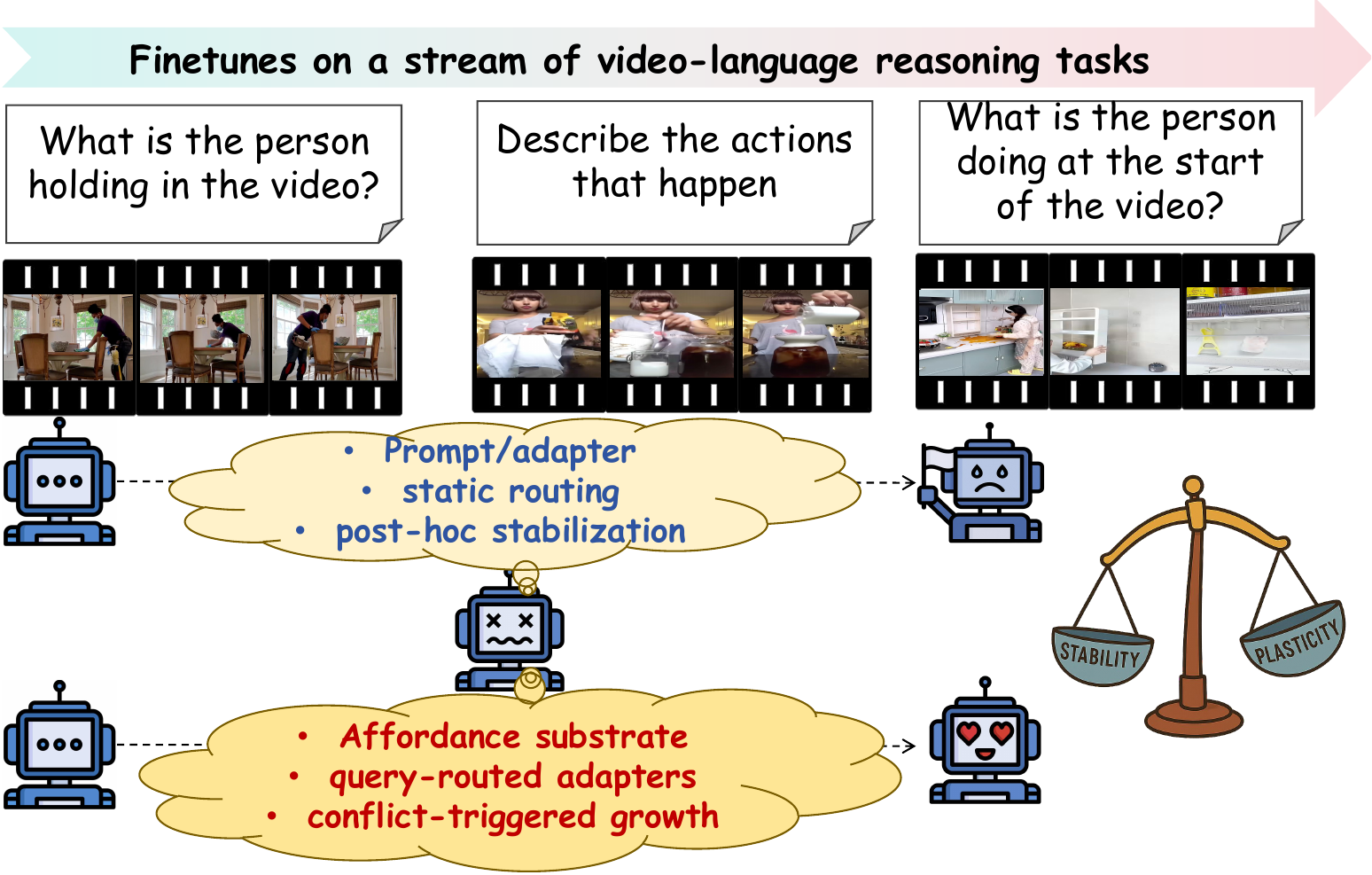}
	\caption{Under a stream of video–language reasoning tasks, existing methods rely on prompt/adapter add-ons with static routing and post-hoc stabilization, leaving the stability–plasticity trade-off implicit. AFD instead anchors evidence in a slowly varying affordance substrate and applies query-routed, conflict-triggered adapter updates, explicitly separating stability from plasticity.}
	\label{fig:teaser}
\end{figure}

Video understanding~\cite{buch2022revisiting} underpins assistive analytics, retrieval, and embodied agents, where models must parse long, multi-event sequences and align visual dynamics with natural language~\cite{lavee2009understanding}. Yet real deployments seldom operate in a stationary world: data, domains, and query styles evolve over time~\cite{Bisecle25}. \emph{Continual video–language reasoning} is therefore central to assistants that must keep learning from non-stationary streams while answering open-form queries, and step reasoning in videos. 
Recent studies underscore both the opportunity and difficulty: long egocentric protocols mix heterogeneous tasks and language forms~\cite{ViLCo24}, and time-continual pretraining shows that naïve fine-tuning quickly drifts while replay-heavy strategies raise cost and privacy concerns~\cite{ticclip24,FoMo}. Robust solutions must retain prior skills, acquire new ones, and remain efficient without relying on storing old videos~\cite{VideoLLaMB25}.

Despite rapid progress, two shortcomings persist: \emph{(i) Objectives for stability vs.\ adaptability are under-specified.} Existing lines either specialize with prompts/adapters or preserve geometry via distillation/topology constraints, but rarely articulate \emph{which structures should remain task-invariant and which should adapt} along a stream—making stability largely incidental and hard to diagnose~\cite{ColPro24,Dam25,Bisecle25,zscl23,ctp23}. \emph{(ii) Plasticity is budgeted heuristically.} Capacity and routing are commonly fixed or task-indexed, while interference is mitigated post hoc by merging or global regularizers. Few approaches use online signals to decide \emph{when/where} to change~\cite{L2P,CODA-Prompt,ddas24,DIKI24,clmoe25}.

This paper asks: \emph{Can continual video–language learning be anchored in a slowly varying, interaction-centered substrate that separates stability from adaptation?} As shown in Fig.~\ref{fig:teaser}, we adopt an \emph{affordance-first decomposition(AFD)}: videos are mapped into affordance evidence, while adaptation is concentrated in a lightweight, query-routed reasoning module. The affordance evidence provides time-aligned, reusable signals that remain stable across tasks. The routed module focuses plastic updates only where conflict arises, preserving past capabilities without broad parameter drift. 

Our contributions are as follows:
 \begin{itemize}
  \item We introduce an \emph{affordance-first decomposition} that separates a \emph{slowly varying} shared substrate from a \emph{plastic} routed scheduler, clarifying where stability vs.\ adaptability should live in continual video–language learning.
  \item We operationalize conflict-aware adaptation by query-conditioned per-layer routing and selective capacity growth, and adopt a question-only replay strategy that is privacy- and memory-friendly.
  \item Across ViLCo\cite{ViLCo24} and standard VideoQA suites, our approach achieves state-of-the-art results with substantially lower forgetting compared to strong baselines and recent SOTAs, while remaining order-robust and compute-efficient.
\end{itemize}

\section{Related Work}
\label{sec:related}

\subsection{Continual video–language learning.}
Early multimodal CL revealed strong order sensitivity and forgetting in linguistically structured VQA and captioning~\cite{VQA_CL,RATT20}. ViLCo-Bench later standardized long-video continual protocols and shifted evaluation toward open-form reasoning~\cite{ViLCo24}. Recent SOTAs adapt LLM/VLM backbones via prompting or adapters—ColPro injects collaborative prompts~\cite{ColPro24}, DAM merges dataset-wise adapters at inference~\cite{Dam25}, and Bisecle couples binding with separation to reduce interference~\cite{Bisecle25}. Yet these approaches often specialize by dataset or prompt banks and model interference only implicitly. AFD departs by separating a slowly varying affordance head from a query-routed, conflict-aware scheduler and by relying on question-only replay.

\subsection{Parameter-efficient routing and prompting.}
Prompt/adapter methods enable rehearsal-free selectivity for CLIP-style models~\cite{coop22,L2P,CODA-Prompt}, while adapter/MoE variants improve transfer/retention via selective gating or consolidation~\cite{ddas24,DIKI24,SD24,clap4clip24,RAIL24,cclip25,clmoe25}. However, capacity is typically fixed (e.g., prompt count or LoRA rank), routers are often task/domain-driven, and the stabilized representation is not made explicit. Our scheduler instead performs per-layer query-conditioned routing over LoRA experts and grows rank only when measured conflict exceeds a threshold, concentrating plasticity while bounding capacity.

\subsection{Continual multimodal learning.}
Geometry-preserving CL for VLMs aligns cross-/intra-modal similarity or momentum topologies to protect zero-shot ability~\cite{modx23,zscl23,ctp23}, pragmatic training shows that warm-start plus replay approaches full re-training at far lower cost~\cite{ticclip24}. Privacy-aware/data-free directions use structured or synthetic replay and rectify teacher noise~\cite{constructvl23,vqacl23,gift25,dkr24,quad25}. These streams still blur what should remain shared and where to place plasticity. AFD contributes an interpretable, slow-varying affordance substrate.

\begin{figure}[thb]
	\centering
	\includegraphics[width=0.9\linewidth]{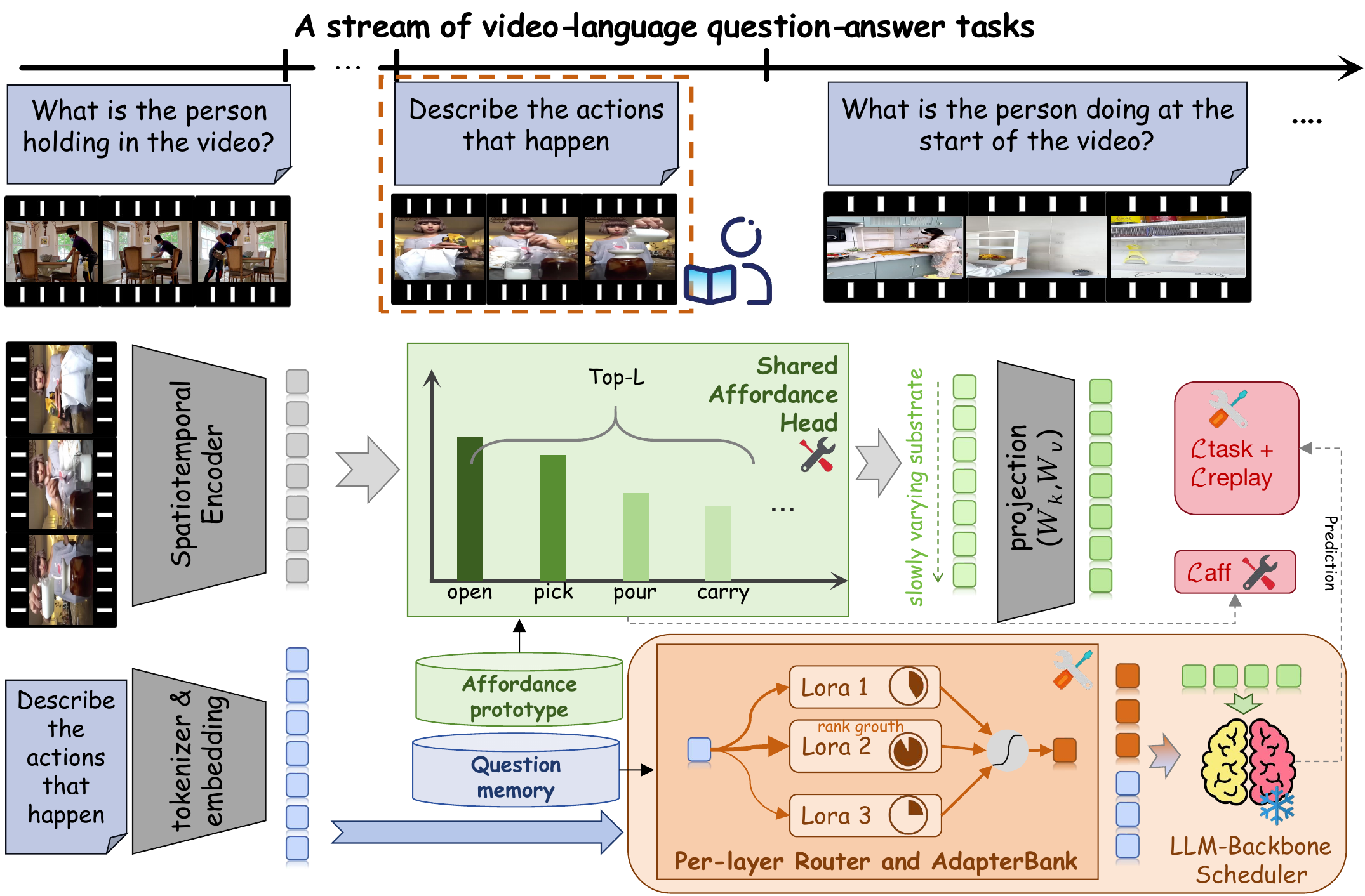}
	\caption{Overview of the proposed Affordance-First Decomposition (AFD) framework for continual video–language question answering. A stream of video–language tasks arrives over time, each video is encoded and mapped by a shared affordance head into slowly varying affordance tokens and prototypes, while questions are embedded and stored for replay to route per-layer LoRA adapters in the LLM-backbone scheduler. Stability loss $\mathcal{L}_{\text{aff}}$ acts only on the affordance head, whereas task and replay losses $(\mathcal{L}_{\text{task}}+\mathcal{L}_{\text{replay}})$ act only on the routed adapters, explicitly separating a stable affordance substrate from a plastic reasoning module.}
	\label{fig:pipeline}
\end{figure}

\section{Method}
\label{sec:method}
We address continual video–language reasoning where tasks arrive as a stream and both domains and query formats evolve over time. Our approach is \textbf{Affordance-First Decomposition}(AFD). A shared head converts a video into temporally grounded affordance tokens. A plastic LLM-backbone scheduler consumes the query tokens together with these affordance tokens and performs event-level reasoning through per-layer routed low-rank adapters. Stability acts only on the shared head. Plasticity and task specialization are absorbed by the LLM scheduler. Two compact memories enable practical rehearsal. Figure~\ref{fig:pipeline} presents the pipeline.

\subsection{Problem setup}
We work in a streaming setting where tasks arrive over time. Each task provides labeled triples $(V,q,y)$ and an unlabeled clip pool for stability. A video $V=\{F_t\}_{t=1}^{T}$ is the visual input. A query $q$ is free text. The target $y$ is either an open answer or a temporal span or a step sequence.

The model has two parts that play different roles. The \emph{shared affordance head} $h_{\psi}$ converts the video into continuous affordance tokens that are linearly projected into the LLM hidden space to form keys and values $(K,V)$. The \emph{LLM-backbone scheduler} $g_{\phi}^{\text{LLM}}$ embeds the query with the same LLM to obtain $U$ and attends to $(K,V)$ to produce the task-appropriate prediction. The overall predictor is
\begin{equation}
\label{eq:factorization}
f_{\Theta}(V,q) = g_{\phi}^{\text{LLM}}\!\big(U, K, V\big)
\end{equation}
where $U = E_{\text{LLM}}[\mathrm{Tok}(q)]$ , $(K,V) = \Pi(h_{\psi}(V))$.
We maintain two small memories that serve training only. $\mathcal{M}_Q$ stores diverse past questions for replay distillation. $\mathcal{M}_A$ stores affordance prototypes for diagnostics. By design, stability constraints are applied to the shared head $h_{\psi}$ and task plasticity is absorbed by the scheduler $g_{\phi}^{\text{LLM}}$.

\subsection{Architecture}
\paragraph{Shared affordance head $h_{\psi}$}
Affordances are object–action regularities that vary slowly across domains and tasks. A stable affordance space reduces gradient conflict for downstream reasoning.

Let $X_{1:T}=\mathrm{Enc}_v(V)$ with $X_t\in\mathbb{R}^{d_v}$. The head produces an affordance distribution
\begin{equation}
\begin{aligned}
z_t &= f_{\text{st}}(X_t), \\
s_t(a) &= \langle w_a, z_t\rangle, \\
P_t(a) &= \mathrm{softmax}_{a\in\mathcal{V}_A}\!\big(s_t(a)/\tau\big).
\end{aligned}
\label{eq:aff-softmax}
\end{equation}
We form a sparse renormalized distribution on the top $L$ categories,
\begin{equation}
q_t(a)=\frac{\mathbf{1}[a\in\text{Top-}L]\cdot P_t(a)}{\sum_{a'\in \text{Top-}L}P_t(a')},
\end{equation}
then build a continuous token with an embedding table $E_A\in\mathbb{R}^{|\mathcal{V}_A|\times d_a}$,
\begin{equation}
A_t=\sum_{a\in\mathcal{V}_A} q_t(a)\,E_A[a]\in\mathbb{R}^{d_a}.
\label{eq:aff-token}
\end{equation}
To interface with the LLM hidden space we project
\begin{equation}
K_t = W_K A_t, \quad V_t = W_V A_t,
\label{eq:proj}
\end{equation}
where stack $K=\mathrm{stack}_t(K_t)$ and $V=\mathrm{stack}_t(V_t)$. $W_K, W_V \in \mathbb{R}^{d_{\text{model}} \times d_a}$. 

\paragraph{LLM tokenizer and text interface}
The scheduler is an LLM. Queries must be embedded in its native space for unified conditioning and generation.
Let $\mathrm{Tok}$ be the LLM tokenizer with vocabulary $\mathcal{V}_{\text{LLM}}$ and $E_{\text{LLM}}\in\mathbb{R}^{|\mathcal{V}_{\text{LLM}}|\times d_q}$ its input embedding matrix. We compute
\begin{equation}
    \begin{aligned}
    [w_1,\dots,w_L] &= \mathrm{Tok}(q), \\
    U = [u_1,\dots,u_L]\ &\text{with}\ u_\ell = E_{\text{LLM}}[w_\ell], \\
    u &= \mathrm{Pool}(U).
    \end{aligned}
\label{eq:text}
\end{equation}
where $U\in\mathbb{R}^{L\times d_q}$ are token embeddings. $u\in\mathbb{R}^{d_q}$ is a pooled query state used by the router.

\paragraph{Per-layer routing and adapter injection}
Heterogeneous tasks require different reasoning skills. Instance-wise routing focuses plastic capacity and reduces interference.
At each adapterized LLM layer $\ell\in\mathcal{S}$ the router computes mixture weights
\begin{equation}
\alpha^{(\ell)}=\mathrm{softmax}\!\big(W_r^{(\ell)}\,u\big)\in\Delta^{m-1},
\label{eq:router}
\end{equation}
and injects a mixture of LoRA experts into the linear map $W^{(\ell)}$
\begin{equation}
\widetilde{W}^{(\ell)} = W^{(\ell)} + \sum_{j=1}^{m}\alpha_{j}^{(\ell)}\,\frac{B_{j}^{(\ell)}A_{j}^{(\ell)}}{s_{j}^{(\ell)}},
\label{eq:lora}
\end{equation}
where $A_{j}^{(\ell)}\!\in\!\mathbb{R}^{r_{j}^{(\ell)}\times d_{\text{in}}}$ and $B_{j}^{(\ell)}\!\in\!\mathbb{R}^{d_{\text{out}}\times r_{j}^{(\ell)}}$ are low-rank factors with rank $r_{j}^{(\ell)}$ and $s_{j}^{(\ell)}$ is a scale.

We measure conflict by a clamped negative cosine with numerical stabilization
\begin{equation}
c_j^{(k)}=\left[
-\frac{\langle g_j^{(k)},\,\bar g_j^{(1:k-1)}\rangle}{\|g_j^{(k)}\|_2\,\|\bar g_j^{(1:k-1)}\|_2+\varepsilon}
\right]_+,\qquad \varepsilon>0.
\label{eq:conflict}
\end{equation}
The LoRA rank grows discretely by the excess conflict above a threshold and is capped
\begin{equation}
\begin{aligned}
\Delta r_j^{(k)} &= \min\!\Big\{r_{\max} - r_j^{(k-1)},\ \big\lfloor \gamma\,(c_j^{(k)} - \tau_c)_+\big\rfloor\Big\}, \\
r_j^{(k)} &= r_j^{(k-1)} + \Delta r_j^{(k)}.
\end{aligned}
\label{eq:rank-update}
\end{equation}
Here $\tau_c\in[0,1)$ is the threshold, $\gamma>0$ is a gain, and $(\cdot)_+=\max\{\cdot,0\}$. Initialization of new columns can follow a truncated SVD of the projected gradient with Tikhonov-regularized inverses, detailed in the Supplementary Materials.

\paragraph{LLM-backbone scheduler with affordance cross-attention}
The LLM composes language evidence with affordance evidence and outputs the final reasoning result while insulating the shared head from frequent changes.

At layers $\ell\in\mathcal{S}$ the LLM attends to affordances
\begin{equation}
\begin{aligned}
Q &= U W_Q, \\
H &= \mathrm{Attn}(Q, K, V), \\
r &= \mathrm{Pool}(H, U).
\end{aligned}
\label{eq:cross}
\end{equation}
and task heads support three query formats with a unified supervision
\begin{equation} \small
\begin{aligned}
\mathcal{L}_{\text{task}}
&= \mathbb{I}_{\text{gen}}\!\left[-\sum_{m}\log p(y_m \mid y_{<m}, U, K, V)\right] \\
&\quad + \mathbb{I}_{\text{span}}\!\left[-\log p_s(t_s) - \log p_e(t_e) + \lambda_{\mathrm{u}}\big(1 - \mathrm{tIoU}\big)\right] \\
&\quad + \mathbb{I}_{\text{step}}\!\left[-\sum_{m}\log p(\pi_m \mid \pi_{<m}, U, K, V)\right].
\end{aligned}
\label{eq:task}
\end{equation}
Here $t_s,t_e\in\{1,\dots,T\}$ and $\mathrm{tIoU}$\cite{lan2023survey} is computed on discrete frame intervals. Each sample activates exactly one head indicated by the selector $\mathbb{I}_{\text{gen}}$ or $\mathbb{I}_{\text{span}}$ or $\mathbb{I}_{\text{step}}$.

\subsection{Training objective}
\paragraph{Affordance stability on $h_{\psi}$}
We blend weak alignment with teacher consistency
\begin{equation}
\begin{aligned}
\mathcal{L}_{\text{aff}}
&= \beta\left[-\sum_{\ell}\log\Big(\sum_{t\in S_\ell}\sum_{a\in\mathcal{C}_\ell}P_t(a)\Big)\right] \\
&\quad + (1-\beta)\,\frac{1}{T}\sum_{t=1}^{T}\mathrm{KL}\!\left(\bar{P}_t\,\|\,P_t\right),
\end{aligned}
\label{eq:aff}
\end{equation}
where $S_\ell$ is an ASR span with verb candidates $\mathcal{C}_\ell$. $P_t(\cdot)$ is the current affordance distribution and $\bar{P}_t(\cdot)$ is the frozen teacher from the previous task. The scalar $\beta\in[0,1]$ balances the two terms. Gradients of $\mathcal{L}_{\text{aff}}$ update $\psi$ only.

\paragraph{Question-only replay distillation on $g_{\phi}^{\text{LLM}}$}
We store diverse past questions and distill on current clips with temperature $T_{\mathrm{kd}}>0$ and optional confidence masking $\rho\in(0,1)$
\begin{equation}
\begin{aligned}
\mathcal{L}_{\text{replay}}
&= \mathbb{E}_{q^{(u)},V}\ \mathrm{KL}\!\Big(\bar{p}_{T}(\cdot\mid V,q)\,\|\,p_{T}(\cdot\mid V,q)\Big), \\
&\quad\text{where } p_{T} = \mathrm{softmax}(z/T_{\mathrm{kd}}).
\end{aligned}
\label{eq:replay}
\end{equation}
and we include only pairs whose teacher maximum probability exceeds $\rho$ to suppress noisy supervision. Here, the expectation is over $q^{(u)} \in \mathcal{M}_Q$ and $V$, and $p_{T} = \mathrm{softmax}(z/T_{\mathrm{kd}})$ denotes the temperature-scaled probability distribution. Gradients of $\mathcal{L}_{\text{replay}}$ update $\phi$.

\paragraph{Full objective}
At task $k$ we minimize the three-term objective
\begin{equation}
\mathcal{L}^{(k)}=\mathcal{L}_{\text{task}}^{(k)}+\lambda_{\text{aff}}\mathcal{L}_{\text{aff}}^{(k)}+\lambda_{\text{rep}}\mathcal{L}_{\text{replay}}^{(k)}
\label{eq:full}
\end{equation}
with positive scalars $\lambda_{\text{aff}}$ and $\lambda_{\text{rep}}$. Gradients of $\mathcal{L}_{\text{task}}$ and $\mathcal{L}_{\text{replay}}$ update the LLM scheduler with routed adapters and rank growth.
\section{Experimental Results}

\subsection{Experimental Setup}
\label{sec:exp-setup}

\paragraph{Datasets}
\textbf{(i) ViLCo-Bench.}
To evaluate continual video--language reasoning across heterogeneous tasks, we adopt \textbf{ViLCo-Bench} with its three continual tracks built from Ego4D: \emph{Moment Query (MQ)}, \emph{Natural Language Query (NLQ)}, and \emph{Visual Query (VQ)}~\cite{ViLCo24}. We follow its official query-incremental protocols: MQ (5 tasks, 110 actions), NLQ (13 tasks, open-vocabulary queries), and VQ (5 tasks with vision queries).
\noindent\textbf{(ii) Continual VideoQA suites.}
For domain- and time-incremental VideoQA, we adopt the datasets used by recent continual VL methods~\cite{Dam25}. We use the \emph{dataset-incremental} protocol (train adapters sequentially per dataset, evaluate on all test sets) and the \emph{time-incremental} protocol on iVQA by partitioning videos by upload time, as in \cite{Dam25}.
\noindent\textbf{(iii) Complex reasoning VideoQA.}
To probe multi-step reasoning and planning, we include \textbf{CVQA} from VQAGuider~\cite{VQAGuider25} and the \textbf{11-VideoQA} benchmark set from LTR~\cite{LTR25} (we report the overlapped common sets with our training budget). These tracks are not used for pure continual metrics. Instead, we report zero-shot/finetuned generalization and sequence-trained robustness.
\noindent\textbf{(iv) Long-video understanding (stress test).}
To measure long-range temporal robustness of our AFD, we include \textbf{VideoMME} and \textbf{MLVU} used by LongVU~\cite{LongVU25} and the long-context evaluations in VideoLLaMB~\cite{VideoLLaMB25}.

\begin{table*}[t]
\centering
\small
\setlength{\tabcolsep}{6pt}
\caption{\textbf{Domain-Incremental VideoQA on 6 datasets.} Metric: top-1 accuracy (\%). Best, second-best, and third-best cells are shaded in \colorbox{gray!30}{\strut \textbf{dark}} gray, \colorbox{gray!18}{\strut \textbf{medium}} gray, and \colorbox{gray!10}{\strut \textbf{light}} gray, respectively. }
\label{tab:vidqa-main}
\begin{tabular}{lcccccccc}
\toprule
Method & iVQA & MSVD & MSRVTT & LSMDC & ANet & TGIF & Avg.$\uparrow$ & Forget$\downarrow$ \\
\midrule
\multicolumn{9}{l}{\emph{Upper bounds}}\\
Adapters (Multitask) & 39.7 & 56.6 & 46.7 & 62.9 & 42.2 & 67.8 & 52.6 & -- \\
Prompt Tuning (Multitask) & 35.0 & 49.0 & 37.1 & 57.4 & 33.9 & 59.2 & 45.3 & -- \\
\midrule
\multicolumn{9}{l}{\emph{Continual methods}}\\
Zero-Shot & 26.8 & 33.0 & 15.0 & 51.5 & 25.5 & 41.9 & 32.3 & -- \\
Seq-FT & 28.4 & 36.0 & 23.7 & 52.1 & 31.2 & 67.6 & 39.8 & -- \\
EWC~\cite{EWC} & 29.9 & 39.3 & 25.5 & 54.9 & 32.4 & \bestcell{68.5} & 41.6 & $-10.9$ \\
LwF~\cite{LwF} & 28.3 & 38.2 & 25.8 & 56.4 & 33.6 & \secondcell{67.7} & 41.8 & $-10.7$ \\
L2P~\cite{L2P} & 32.8 & 43.3 & 32.1 & 54.8 & 27.2 & 54.4 & 40.8 & $-4.6$ \\
CODA-Prompt~\cite{CODA-Prompt} & 32.9 & 44.8 & 28.7 & 50.7 & 23.9 & 54.7 & 39.6 & $-5.7$ \\
S-Prompts~\cite{S-Prompts} & 31.8 & 45.5 & 30.2 & 54.9 & 27.9 & 56.1 & 41.1 & $-4.2$ \\
MoE (adapters) & 31.7 & 37.1 & 23.9 & 57.7 & 28.9 & 66.8 & 41.0 & $-11.6$ \\
ColPro~\cite{ColPro24} & 35.3 & 49.6 & 36.7 & 58.4 & 32.1 & 61.0 & 45.5 & $-3.9$ \\
LAE~\cite{LAE23} & 36.1 & 50.2 & 37.5 & 58.8 & 32.7 & 61.5 & 46.1 & $-3.4$ \\
Bisecle~\cite{Bisecle25} & \thirdcell{38.9} & \thirdcell{52.1} & \thirdcell{41.3} & \thirdcell{62.1} & \thirdcell{35.4} & 66.3 & \thirdcell{49.4} & \thirdcell{-2.7} \\
DAM~\cite{Dam25} & \secondcell{39.1} & \secondcell{53.6} & \secondcell{42.2} & \secondcell{63.0} & \secondcell{36.3} & 66.8 & \secondcell{50.2} & \secondcell{-2.3} \\
\rowcolor{gray!30!white}
\textbf{AFD (ours)} & \bestcell{40.7} & \bestcell{55.8} & \bestcell{43.7} & \bestcell{63.6} & \bestcell{38.0} & 68.1 & \bestcell{51.6} & \bestcell{-1.8} \\
\bottomrule
\end{tabular}

\vspace{0.5em}
\raggedright \footnotesize \emph{Notes.} The six datasets follow the sequence iVQA $\rightarrow$ MSVD $\rightarrow$ MSRVTT $\rightarrow$ LSMDC $\rightarrow$ ActivityNet (ANet) $\rightarrow$ TGIF. “Adapters/Prompt Tuning (Multitask)” jointly train on all datasets and are shown only as non-continual ceilings.
\end{table*}

\paragraph{Evaluation Metrics}
\textbf{(i)~ViLCo-Bench.}
We follow the official metrics~\cite{ViLCo24}: average recall $\text{R@1}$/$\text{R@5}$ at IoU thresholds (MQ/NLQ), with \emph{Backward Forgetting (BwF)} as the continual metric; for VQ we report \emph{tAP@0.25}, \emph{stAP@0.25}, average recall, and success rate. We also compute the average performance $P$ across tasks as in \cite{ViLCo24}.
\textbf{(ii)~VideoQA.}
For single-answer VideoQA, we report \emph{top-1 accuracy}. For multiple-choice datasets, we use \emph{MC Acc}. For open-ended datasets with standard VQA-style processing we additionally report \emph{EM}/\emph{F1} when applicable .
\textbf{(iii)~Continual learning diagnostics.}
In addition to BwF on ViLCo, we report \emph{Average Accuracy} after each task and the standard \emph{Backward Transfer (BWT)} and \emph{Forgetting} where defined in the compared papers. For time-incremental iVQA, we report per-slice accuracy and average across time slices~\cite{Dam25}.

Please refer to the supplementary materials for baselines and the implementation details of AFD.

\subsection{Main results}
\label{sec:main-results}

\vspace{1mm}
\noindent\textbf{VideoQA.}
Table~\ref{tab:vidqa-main} shows that AFD attains the best average accuracy among continual methods (51.6\%), surpassing DAM by +1.4 points while maintaining the lowest forgetting (\(-1.8\)). Gains are consistent on five of six datasets. 

\begin{table}[t]
\centering
\small
\setlength{\tabcolsep}{3pt}
\caption{\textbf{ViLCo-Bench (Ego4D) under query-incremental protocols.}}
\label{tab:vilco-main}
\resizebox{0.8\linewidth}{!}{%
\begin{tabular}{lccc}
\toprule
Method & MQ R@1@0.5 $\uparrow$ & NLQ R@1@0.5 $\uparrow$ & VQ stAP@0.25 $\uparrow$ \\
\midrule
ViLCo~\cite{ViLCo24} & 21.2 & 12.6 & 13.4 \\
ColPro~\cite{ColPro24} & 26.2 & \thirdcell{17.8} & 15.9 \\
DAM~\cite{Dam25}& \thirdcell{27.1} & 16.9 & \thirdcell{16.5} \\
Bisecle~\cite{Bisecle25} & 26.8 & \secondcell{18.2} & 16.1 \\
\rowcolor{gray!30!white}
\textbf{AFD (ours)} & \bestcell{29.6} & \bestcell{20.7} & \bestcell{18.4} \\
\bottomrule
\end{tabular}%
}
\end{table}

\vspace{1mm}
\noindent\textbf{ViLCo-Bench.}
As summarized in Table~\ref{tab:vilco-main}, AFD achieves the strongest performance on all three query types: +2.5 R@1 on MQ, +2.5 R@1 on NLQ, and +1.9 stAP on VQ over the best competing baseline. 


\begin{table*}[t]
\centering
\small
\setlength{\tabcolsep}{5.2pt}
\caption{\textbf{Additional evaluations.} Left: \emph{time-incremental iVQA} (4 temporal slices by upload time). Right: \emph{complex reasoning} (non-continual) and \emph{long-video stress tests} (non-continual). Best, second-best, and third-best cells are shaded in \colorbox{gray!30}{\strut \textbf{dark}}/\colorbox{gray!18}{\strut \textbf{medium}}/\colorbox{gray!10}{\strut \textbf{light}} gray, respectively.}
\label{tab:aux-evals}

\resizebox{0.8\linewidth}{!}{%
\begin{minipage}{0.64\textwidth}
\centering
\textbf{(a) Time-Incremental iVQA (top-1 \%, higher is better)}
\vspace{0.4em}

\begin{tabular}{lcccccc}
\toprule
Method & S1 & S2 & S3 & S4 & Avg.$\uparrow$ & Forget$\downarrow$ \\
\midrule
Zero-Shot & 26.9 & 26.6 & 26.3 & 26.0 & 26.5 & -- \\
Seq-FT & 29.1 & 27.4 & 27.0 & 26.3 & 27.5 & -- \\
EWC~\cite{EWC} & 31.5 & 30.6 & 30.0 & 29.4 & 30.4 & $-6.7$ \\
LwF~\cite{LwF} & 31.2 & 30.3 & 29.8 & 29.1 & 30.1 & $-6.9$ \\
SMoE~\cite{SMoE25} & 35.3 & 33.6 & 31.5 & 34.2 & 36.9 & $-3.5$ \\
DIKI~\cite{DIKI24} & 31.4 & 28.6 & 30.2 & 31.9 & 34.4 & $-5.3$ \\
DMNSP~\cite{DMNSP25} & 34.0 & 33.6 & 30.6 & 33.9 & 36.4 & $-4.2$ \\
SMoLoRA~\cite{SMoLoR} & 33.2 & 31.0 & 29.4 & 32.5 & 34.1 & $-5.6$ \\
L2P~\cite{L2P} & 34.9 & 34.0 & 33.1 & 32.5 & 33.6 & $-3.9$ \\
ColPro~\cite{ColPro24} & 36.1 & 35.2 & 34.1 & 33.5 & 34.7 & $-3.3$ \\
Bisecle~\cite{Bisecle25} & \thirdcell{39.8} & \thirdcell{37.7} & \thirdcell{36.8} & \thirdcell{36.1} & \thirdcell{37.6} & \thirdcell{$-2.5$} \\
DAM~\cite{Dam25} & \secondcell{40.2} & \secondcell{38.1} & \secondcell{37.2} & \secondcell{36.9} & \secondcell{38.1} & \secondcell{$-2.2$} \\
\rowcolor{gray!30!white}
\textbf{AFD (ours)} & \bestcell{41.8} & \bestcell{39.6} & \bestcell{38.5} & \bestcell{38.1} & \bestcell{39.5} & \bestcell{$-1.6$} \\
\bottomrule
\end{tabular}
\end{minipage}
\hfill
\begin{minipage}{0.43\textwidth}
\centering
\textbf{(b) Non-continual references (higher is better)}
\vspace{0.4em}

\begin{tabular}{lcc}
\toprule
\multicolumn{3}{c}{\emph{Complex reasoning}} \\
\midrule
Method & CVQA EM & 11-VideoQA Acc. \\
\midrule
ColPro~\cite{ColPro24} & 55.8 & 60.2 \\
LTR~\cite{LTR25} & \thirdcell{58.7} & \thirdcell{63.1} \\
VQAGuider~\cite{VQAGuider25} & \secondcell{61.3} & \secondcell{66.5} \\
\rowcolor{gray!30!white}
\textbf{AFD (ours)} & \bestcell{62.8} & \bestcell{67.4} \\
\bottomrule
\end{tabular}

\vspace{0.9em}
\begin{tabular}{lcc}
\toprule
\multicolumn{3}{c}{\emph{Long-video stress test}} \\
\midrule
Method & VideoMME & MLVU \\
\midrule
VideoLLaMB~\cite{VideoLLaMB25} & \thirdcell{60.1} & \thirdcell{55.9} \\
LongVU~\cite{LongVU25} & \secondcell{61.2} & \secondcell{57.3} \\
\rowcolor{gray!30!white}
\textbf{AFD (ours)} & \bestcell{61.7} & \bestcell{57.9} \\
\bottomrule
\end{tabular}
\end{minipage}
} 
\end{table*}

Beyond the domain-incremental and ViLCo results in Tables~\ref{tab:vidqa-main}–\ref{tab:vilco-main}, we further report the time-incremental iVQA protocol and the two non-continual settings (complex reasoning and long-video stress tests) to verify the robustness.

\vspace{1mm}
\noindent\textbf{Time-Incremental iVQA.}
As shown in Table~\ref{tab:aux-evals}(a), AFD achieves the best average accuracy (39.5\%) with the lowest forgetting ($-1.6$), outperforming DAM by \(\mathbf{+1.4}\) points and Bisecle by \(\mathbf{+1.9}\). Per-slice gains are consistent (S1–S4), indicating that affordance-stabilized video tokens alleviate temporal distribution drift.

\vspace{1mm}
\noindent\textbf{Complex Reasoning.}
In Table~\ref{tab:aux-evals}(b, top), AFD attains the highest scores on \textsc{CVQA} and \textsc{11-VideoQA}, slightly surpassing VQAGuider and LTR . These results suggest that AFD’s scheduler composes affordance evidence effectively for multi-step reasoning without explicit tool calls.

\vspace{1mm}
\noindent\textbf{Long-Video Stress Tests.}
Table~\ref{tab:aux-evals}(b, bottom) shows that AFD is better than specialized long-video systems under the same backbone family, while our approach remains architecture-light (no dedicated memory bridges or heavy token pruning).

\subsection{Ablation and Analysis}
\label{sec:ablation}

\vspace{1mm}
\noindent\textbf{Ablation results}
We perform single-factor ablations under the same setup as Section~\ref{sec:exp-setup}. Variants are : 
\ding{182} w/o affordance tokens(direct frame tokens to LLM); 
\ding{183} w/o router (uniform adapter mixing); 
\ding{184} fixed LoRA rank $r{=}8$ (no rank growth); 
\ding{185} w/o question-only replay ($\lambda_{\text{rep}}{=}0$); 
\ding{186} w/o ASR weak-alignment term in $\mathcal{L}_{\text{aff}}$; 
\ding{187} w/o teacher-consistency (KL) in $\mathcal{L}_{\text{aff}}$; 
\ding{188} hard sparsity Top-$L{=}1$; 
\ding{189} smaller memory budgets ($B_Q{=}2$k,\ $B_A{=}256$). Descriptions of these variants can be found in the supplementary materials.
From Table~\ref{tab:ablation-main}, \ding{182} produces the largest drop and increases forgetting by 1.5, highlighting the central role of a stable affordance space. \ding{183} and \ding{184} further confirm that instance-wise routing and conflict-triggered capacity are both important. Other modules also have a positive impact on performance.

\newcommand{\abl}[2]{#1\,{\scriptsize\textcolor{blue}{(#2)}}}

\begin{table*}[t]
\centering
\small
\setlength{\tabcolsep}{6pt}
\caption{\textbf{Single-factor ablations.} 
}
\label{tab:ablation-main}
\resizebox{\textwidth}{!}{%
\begin{tabular}{lccccccccc}
\toprule
\multirow{2}{*}{Variant} &
\multicolumn{4}{c}{\textbf{Domain-Incremental VideoQA}} &
\phantom{abc} &
\multicolumn{3}{c}{\textbf{ViLCo-Bench (Ego4D)}} \\
\cmidrule{2-5} \cmidrule{7-9}
 & Avg. Acc.$\uparrow$ & MSRVTT$\uparrow$ & ANet$\uparrow$ & Forget$\downarrow$ & &
 MQ R@1@0.5$\uparrow$ & NLQ R@1@0.5$\uparrow$ & VQ stAP@0.25$\uparrow$ \\
\midrule
\textbf{Full AFD} & 51.6 & 43.7 & 38.0 & $-1.8$ && 29.6 & 20.7 & 18.4 \\
\midrule
\ding{182}
& \abl{48.7}{-2.9} & \abl{41.1}{-2.6} & \abl{35.4}{-2.6} & \abl{$-3.3$}{+1.5} && \abl{28.0}{-1.6} & \abl{19.1}{-1.6} & \abl{16.9}{-1.5} \\
\ding{183}
& \abl{49.8}{-1.8} & \abl{42.3}{-1.4} & \abl{36.9}{-1.1} & \abl{$-2.6$}{+0.8} && \abl{28.5}{-1.1} & \abl{19.7}{-1.0} & \abl{17.5}{-0.9} \\
\ding{184}
& \abl{50.5}{-1.1} & \abl{42.9}{-0.8} & \abl{37.3}{-0.7} & \abl{$-2.3$}{+0.5} && \abl{28.9}{-0.7} & \abl{20.0}{-0.7} & \abl{17.8}{-0.6} \\
\ding{185}
& \abl{50.2}{-1.4} & \abl{43.0}{-0.7} & \abl{36.8}{-1.2} & \abl{$-2.8$}{+1.0} && \abl{28.6}{-1.0} & \abl{19.8}{-0.9} & \abl{17.6}{-0.8} \\
\ding{186}
& \abl{50.8}{-0.8} & \abl{43.2}{-0.5} & \abl{37.4}{-0.6} & \abl{$-2.2$}{+0.4} && \abl{29.0}{-0.6} & \abl{19.9}{-0.8} & \abl{17.9}{-0.5} \\
\ding{187}
& \abl{50.6}{-1.0} & \abl{43.1}{-0.6} & \abl{37.2}{-0.8} & \abl{$-2.4$}{+0.6} && \abl{28.8}{-0.8} & \abl{19.8}{-0.9} & \abl{17.7}{-0.7} \\
\ding{188}
& \abl{50.7}{-0.9} & \abl{43.0}{-0.7} & \abl{37.3}{-0.7} & \abl{$-2.4$}{+0.6} && \abl{28.9}{-0.7} & \abl{20.1}{-0.6} & \abl{17.8}{-0.6} \\
\ding{189}
& \abl{51.0}{-0.6} & \abl{43.4}{-0.3} & \abl{37.6}{-0.4} & \abl{$-2.1$}{+0.3} && \abl{29.2}{-0.4} & \abl{20.3}{-0.4} & \abl{18.0}{-0.4} \\
\bottomrule
\end{tabular}%
}
\end{table*}

\vspace{1mm}
\noindent\textbf{Are affordance tokens stable?}
\label{sec:aff-stability}
We validate the structural division of labor (\emph{stable} affordance head vs.\ \emph{plastic} LLM scheduler) from a representation perspective. 
Across tasks $k{=}1\!\ldots\!6$ (iVQA$\rightarrow$MSVD$\rightarrow$MSRVTT$\rightarrow$LSMDC$\rightarrow$ANet$\rightarrow$ \\ TGIF), we (i) track prototype drift as cosine distance per affordance prototype between consecutive tasks ($k{-}1\!\rightarrow\!k$), and (ii) compute coverage of verb/action clusters captured by Top-$L$ mixtures ($L\!\in\!\{1,4,6,8,10,12\}$).
From Fig.~\ref{fig:aff-stability}, prototype drift remains small and narrowly distributed across all task transitions, while adjacent-task CKA remains high, evidencing a stable, reusable affordance space. Verb/action coverage rises monotonically with Top-$L$ and plateaus around $L{=}8$, supporting the design choice that soft, sparse mixtures encode co-occurring affordances without destabilizing the head.

\begin{figure}[htbp]
    \centering
    \begin{subfigure}[t]{0.5\textwidth}
        \includegraphics[width=0.9\textwidth]{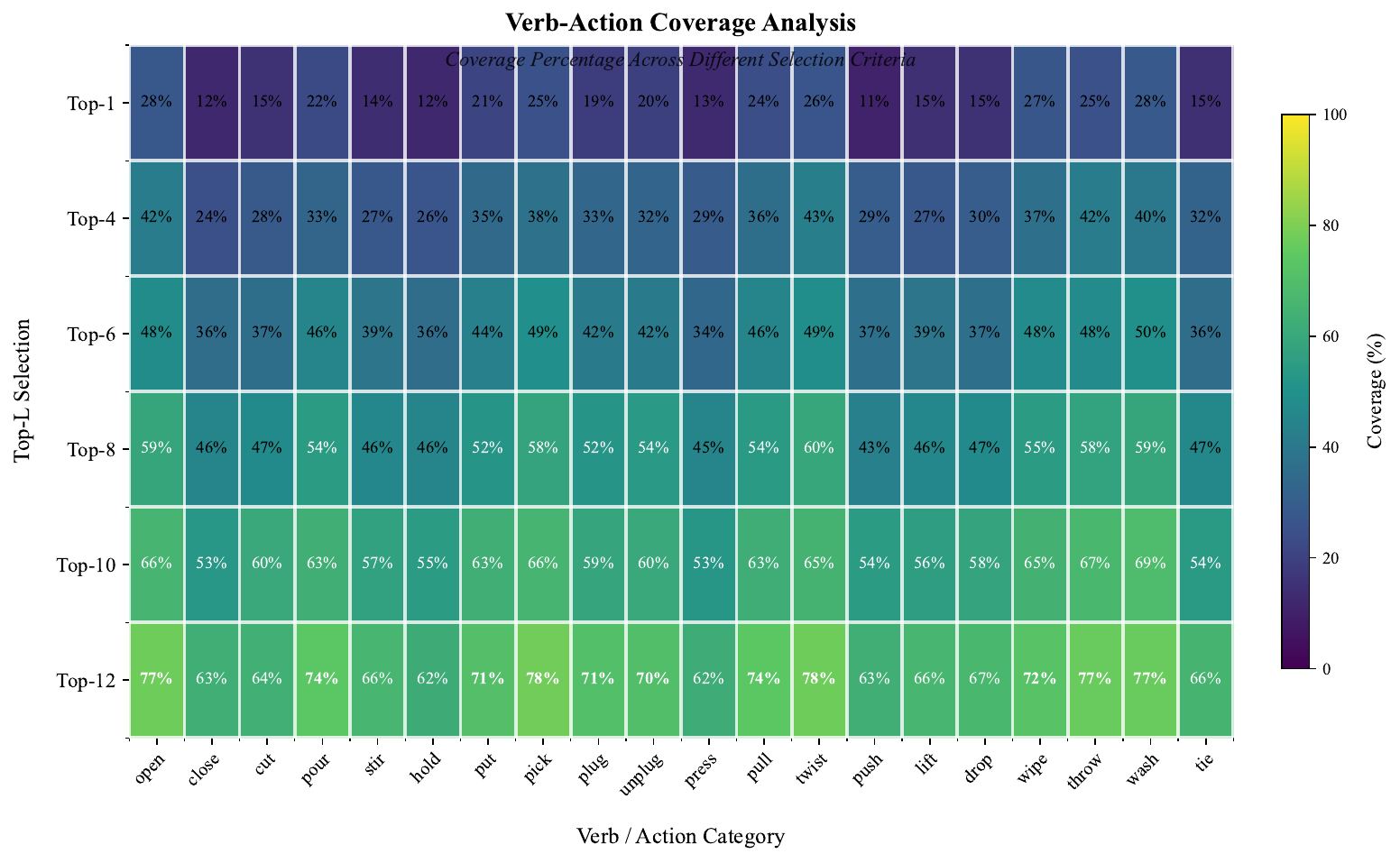}
        \caption{Verb-action coverage analysis at different Top-L thresholds}
        \label{fig:ridge}
    \end{subfigure}
    
    \begin{subfigure}[t]{0.5\textwidth}
        \includegraphics[width=0.9\textwidth]{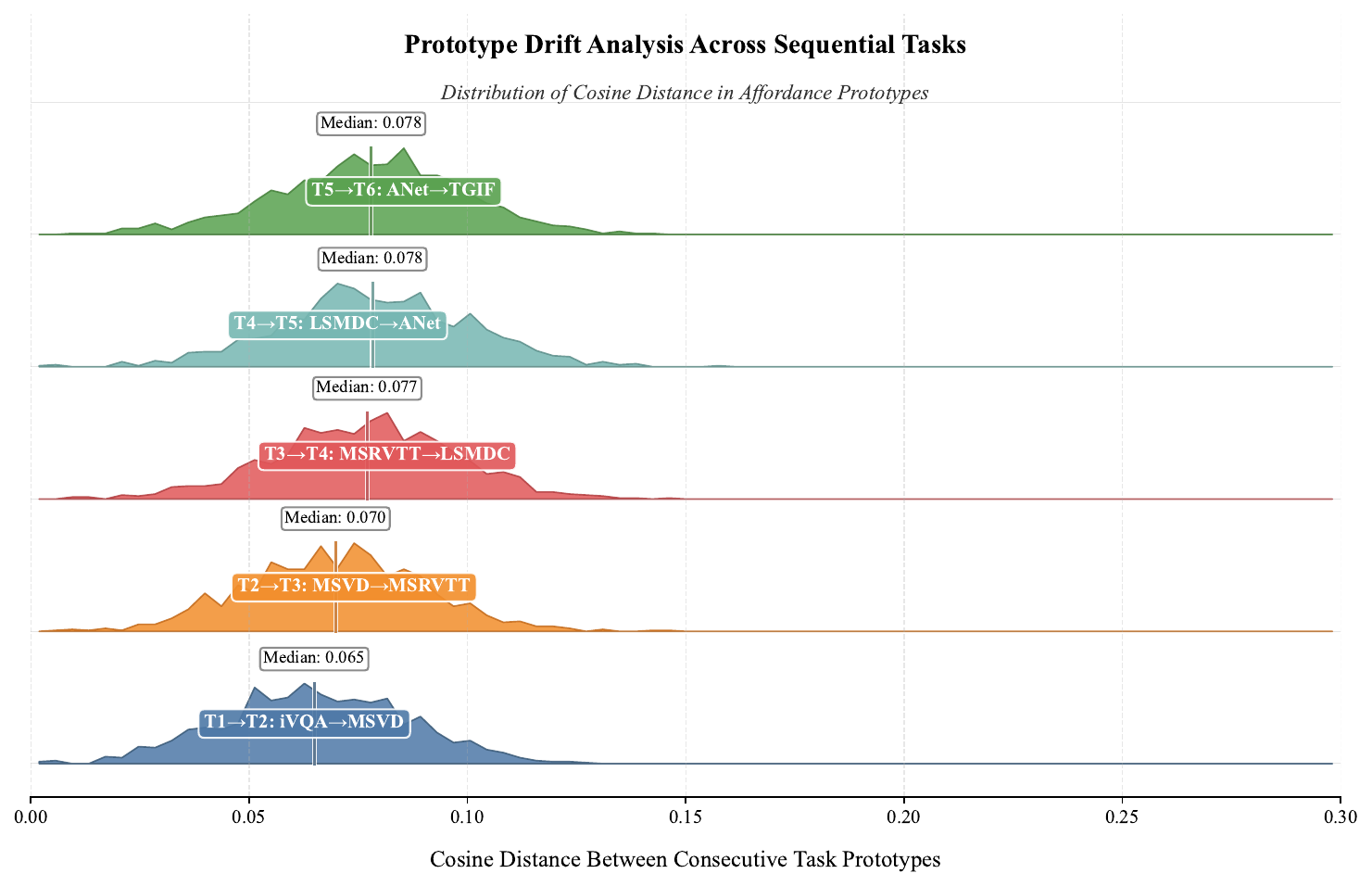}
        \caption{Prototype drift distributions across task transitions}
        \label{fig:heatmap}
    \end{subfigure}
    
    \caption{\textbf{Affordance stability and coverage.} (a) Soft Top-$L$ mixtures increase verb/action coverage without requiring more scheduler capacity.  (b) Drift distributions concentrate near zero with small spread across tasks, consistent with a slowly varying shared space.}
  \label{fig:aff-stability}
\end{figure}

\vspace{1mm}
\noindent\textbf{Case study} In Fig.~\ref{fig:case1}, AFD correctly anticipates the interaction, while Bisecle\cite{Bisecle25} focuses on incidental cues. This indicates that our affordance-first decomposition prioritizes object–action regularities over local appearance, improving causal anticipation.

\begin{figure}[htb]
	\centering
	\includegraphics[width=0.6\linewidth]{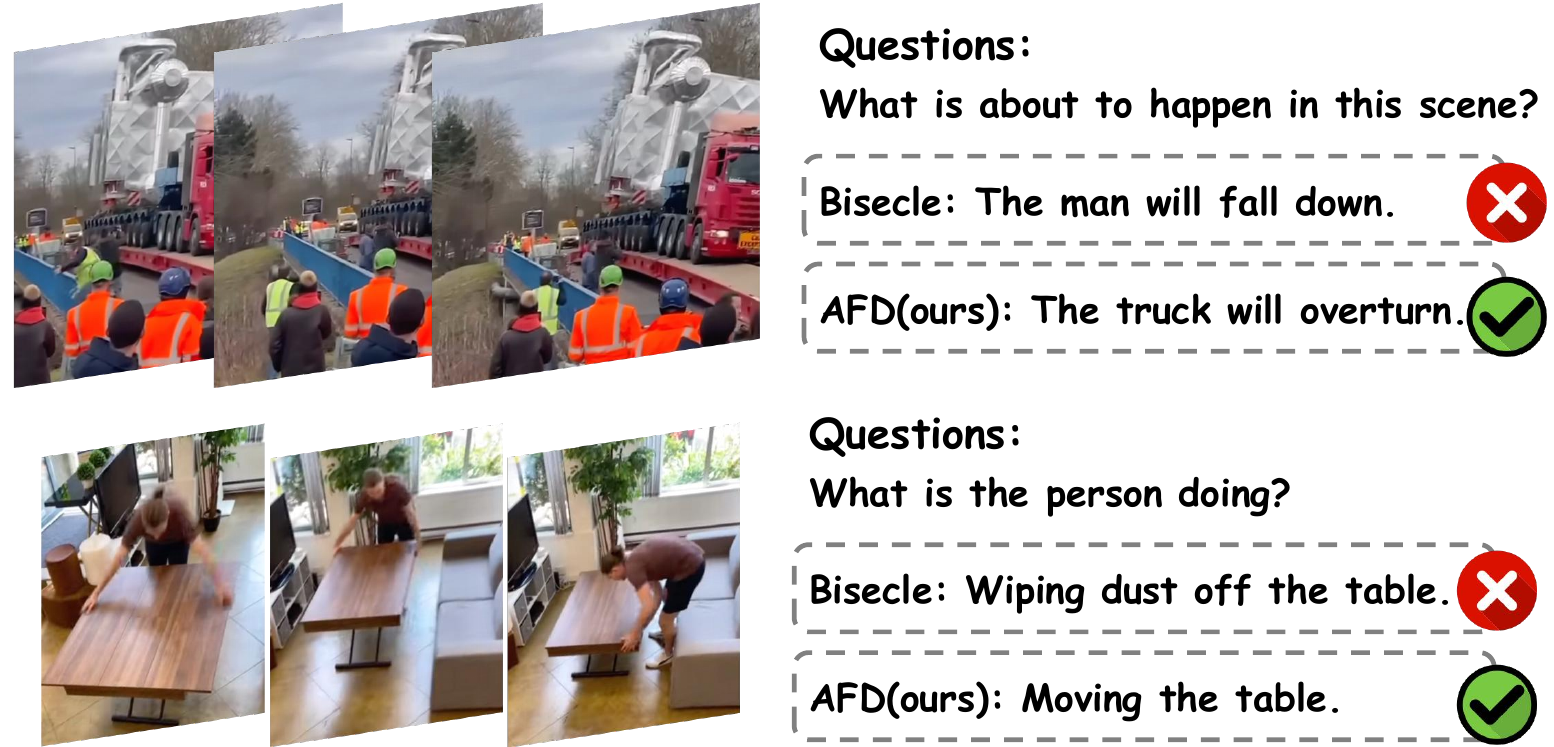}
	\caption{\textbf{Case study}.}
	\label{fig:case1}
\end{figure}

\section{Conclusion}
We addressed the central question of \emph{where stability should live and where plasticity should adapt} in continual video--language learning, proposing an \emph{affordance-first decomposition} with a slowly varying substrate and a query-routed, conflict-aware scheduler. 
Across ViLCo and domain/time-incremental VideoQA, this split yields state-of-the-art accuracy with substantially lower forgetting, and the diagnostics support the “slowly varying” substrate hypothesis. 

Future work will explore online affordance discovery, and multi-sensor extensions (e.g., audio).

\bibliographystyle{unsrtnat}
\bibliography{references}  

\clearpage

\appendix

\section{Supplementary Experimental Setup}
\label{sec:spES}

\subsection{Baselines}
\label{sec:baselines}
We benchmark AFD against widely adopted baselines reported in recent continual VideoQA works (e.g., DAM and Bisecle) and include standard continual learning references. To ensure comparability, all rehearsal-free methods are trained under the same task order, backbone family, and data splits as our method, when a baseline requires a specific backbone (e.g., LLM-adapter stacks), we keep its official configuration.

\paragraph{Naïve \& Upper-Bound References}
\begin{itemize}
  \item \textbf{Zero-Shot:} direct evaluation of the frozen pretrained model without any continual adaptation.
  \item \textbf{Sequential Fine-Tuning (Seq-FT):} train a single model task-by-task. This is a strong but forgetting-prone reference.
  \item \textbf{Multitask (Upper-Bound):} joint training on the union of all tasks (not a continual setting), instantiated with (i) \emph{Adapters} and (ii) \emph{Prompt Tuning}.
\end{itemize}

\paragraph{Regularization-Based CL (rehearsal-free)}
\begin{itemize}
  \item \textbf{EWC~\cite{EWC}:} quadratic penalty on parameter drift along Fisher-sensitive directions to preserve past tasks.
  \item \textbf{LwF~\cite{LwF}:} distillation from the previous model to the current one to mitigate forgetting.
\end{itemize}

\paragraph{Prompt-Based CL }
\begin{itemize}
  \item \textbf{L2P~\cite{L2P}:} a prompt pool with key–query retrieval for instance-wise prompt selection.
  \item \textbf{DualPrompt~\cite{Dualprompt}:} global \& task-specific prompt pairs to balance stability/plasticity.
  \item \textbf{CODA-Prompt (CoDA)~\cite{CODA-Prompt}:} context-dependent prompt adaptation for domain shifts.
  \item \textbf{S-Prompts~\cite{S-Prompts}:} Gaussian-mixture prompt selection for distribution-aware routing.
  \item \textbf{ProgPrompt~\cite{ProgPrompt}:} progressive accumulation of prompts across tasks.
\end{itemize}

\paragraph{Adapter/LoRA \& Model-Merging Families}
\begin{itemize}
  \item \textbf{DAM\cite{Dam25}:} dataset-wise adapters with a non-parametric router and \emph{dynamic adapter merging} at inference.
  \item \textbf{MoE baseline:} mixture-of-experts over adapter modules with learned gating (no merging).
  \item \textbf{Average Merging / RegMean:} weight-space model-merging baselines that combine per-task experts into a single model.
  \item \textit{Router Ablations (for methods with routing):} we also compare router choices commonly used in prior work—L2P’s key-memory retrieval, CODA-Prompt’s selector, S-Prompts’ GMM router, a learnable MLP router, and the non-parametric router used in DAM—to isolate routing effects from adaptation capacity.
\end{itemize}

\paragraph{LLM-Centric Continual VideoQA}
\begin{itemize}
  \item \textbf{ColPro\cite{ColPro24}:} collaborative prompting that injects question constraints, knowledge hints, and temporal awareness into an LLM for rehearsal-free continual VideoQA.
  \item \textbf{LAE~\cite{LAE23}:} Learning–Accumulation–Ensemble framework that reshapes general parameter-efficient tuning (e.g., prompts/adapters) for memory-free continual learning.
  \item \textbf{Bisecle\cite{Bisecle25}:} neuro-inspired \emph{binding \& separation} with multi-directional supervision and contrastive prompt learning atop an LLM–adapter backbone.
\end{itemize}

For fairness, we: (i) match the visual encoder and tokenizer family across methods in each table; (ii) keep rehearsal buffers disabled for rehearsal-free baselines; (iii) align memory/parameter budgets for prompts/adapters (same number or rank); (iv) report Multitask numbers separately as non-continual upper-bounds; and (v) when a baseline is tied to a specific backbone (e.g., LLaMA-Adapter stacks), we keep its official setting and do not mix results into backbones it does not support.

\subsection{Implementation Details}
\paragraph{Backbones and tokenization.}
For videos, we use a ViT-based spatiotemporal encoder at 16--32 input frames with $224^2$ resolution. Queries are embedded by the same LLM family used in AFD’s scheduler. We freeze the visual encoder and train LoRA adapters in the LLM scheduler for plasticity.

\paragraph{AFD hyperparameters.}
Affordance head uses a vocabulary $|\mathcal{V}_A|=1{,}024$ with temperature $\tau\!=\!0.07$, top-$L\!=\!8$ sparse renormalization, and embedding width $d_a\!=\!256$. Projections $W_K,W_V\in\mathbb{R}^{d_\text{model}\times d_a}$ map affordance tokens into the LLM space. Router operates at $\mathcal{S}\!=\!\{4,8,12,16,20,24\}$ with $m\!=\!4$ LoRA experts per layer; initial ranks $r_j^{(\ell)}\!=\!8$, max rank $r_{\max}\!=\!64$, conflict threshold $\tau_c\!=\!0.2$, gain $\gamma\!=\!6$ (Eq.~\ref{eq:rank-update}). Replay temperature $T_{\mathrm{kd}}\!=\!2.0$, confidence mask $\rho\!=\!0.6$ (Eq.~\ref{eq:replay}). Memory budgets: question memory $B_Q\!=\!8$k entries (de-duplicated by semantic hashing), affordance prototypes $B_A\!=\!1$k (per action cluster).

\paragraph{Optimization.}
We train with AdamW, base LR $2\!\times\!10^{-4}$ for adapters/projections and $1\!\times\!10^{-4}$ for the affordance head; cosine decay, warmup 2k steps. Weight decay $0.01$ except for LoRA/bias/scale. Batch size per GPU 24 clips (32 on MQ/VQ), sequence length 16--32 frames. Loss weights: $\lambda_{\text{aff}}\!=\!0.5$, $\lambda_{\text{rep}}\!=\!0.5$ (grid-searched on ViLCo val). Mixed precision (bfloat16) and gradient clipping at 1.0.

\paragraph{Hardware.}
All continual runs use 8$\times$A100-80GB. Long-video stress tests (VideoMME/MLVU) use 16$\times$A100-80GB with activation checkpointing.

\paragraph{Protocol details.}
For ViLCo-Bench, we strictly follow the official task orders and evaluation APIs and report Avg R@1/R@5 at IoU $0.3/0.5$ for MQ/NLQ, and tAP/stAP@0.25, recall, success for VQ, together with BwF~\cite{ViLCo24}. For DAM-style dataset incremental, we freeze the shared backbone and train dataset-specific adapters/banks as prescribed in each baseline. For time-incremental iVQA, we partition by upload date and report per-slice and averaged accuracy~\cite{Dam25}. For LTR and VQAGuider we reproduce their reasoning pipelines on the overlapping datasets and report their official metrics separately to avoid unfair mixing with continual scores.

\subsection{Ablation Variants}
We conduct single-factor ablations under the same setup as Section~\ref{sec:exp-setup}, averaging over 3 seeds. Only the named component is changed while all other settings (optimizer/backbone/task order/hyperparameters) are held fixed. The tested variants are:

\begin{itemize}
\item \textbf{\ding{182} — w/o Affordance tokens (direct frame tokens to LLM).} We bypass the affordance vocabulary and Top-$L$ mixture in Eqs.~\eqref{eq:aff-softmax}–\eqref{eq:aff-token} by setting $A_t \!:=\! f_{\text{st}}(X_t)$ (no $q_t$, no $E_A$). Keys/values are obtained by the same projections as Eq.~\eqref{eq:proj}: $K_t\!=\!W_K A_t$, $V_t\!=\!W_V A_t$. The scheduler and losses are unchanged.

\item \textbf{\ding{183} — w/o Router (uniform adapter mixing).} We disable instance-wise routing in Eq.~\eqref{eq:router} by replacing $\alpha^{(\ell)}\!=\!\mathrm{softmax}(W_r^{(\ell)}u)$ with a uniform mixture $\alpha^{(\ell)}\!=\!\tfrac{1}{m}\mathbf{1}$ at all routed layers $\ell\!\in\!\mathcal{S}$. LoRA experts remain present; rank growth (Eq.~\eqref{eq:rank-update}) stays enabled.

\item \textbf{\ding{184} — Fixed LoRA rank $r{=}8$ (no rank growth).} We freeze all adapter ranks at their initialization $r_j^{(\ell)}\!=\!8$ and disable the update rule in Eq.~\eqref{eq:rank-update}. The router in Eq.~\eqref{eq:router} remains active.

\item \textbf{\ding{185} — w/o Replay (question-only distillation disabled).} We set $\lambda_{\text{rep}}\!=\!0$ in Eq.~\eqref{eq:full}, ignore Eq.~\eqref{eq:replay}, and do not sample from the question memory $\mathcal{M}_Q$. Other losses and memories are unchanged.

\item \textbf{\ding{186} — w/o ASR weak alignment in $\mathcal{L}_{\text{aff}}$.} We remove the alignment term in Eq.~\eqref{eq:aff} by setting $\beta\!=\!0$ (teacher KL only). Gradients still update $\psi$ only.

\item \textbf{\ding{187} — w/o Teacher consistency (KL) in $\mathcal{L}_{\text{aff}}$.} We remove the KL term in Eq.~\eqref{eq:aff} by setting $\beta\!=\!1$ (ASR alignment only). Gradients still update $\psi$ only.

\item \textbf{\ding{188} — Hard sparsity Top-$L{=}1$.} In Eq.~\eqref{eq:aff-token}, we set $L\!=\!1$ and use $a^\star\!=\!\arg\max_a P_t(a)$ with $q_t(a^\star)\!=\!1$ (no soft mixture over $E_A$).

\item \textbf{\ding{189} — Smaller memories.} We reduce budgets from $(B_Q,B_A)\!=\!(8000,1000)$ to $(2000,256)$ while keeping the same de-duplication and sampling policies for $\mathcal{M}_Q$/$\mathcal{M}_A$.
\end{itemize}

\section{Theoretical Analysis}
\label{subsec:theory}

We give a stylized analysis of AFD in a continual-learning setting,
with the goal of making the separation between a stable
affordance head and a plastic routed scheduler mathematically explicit.  To keep notation compact in a double-column layout, we first introduce shorthands that will be used throughout.

\paragraph{Shorthands.}
Recall that tasks arrive sequentially $k=1,\dots,K$.  After finishing
task $k$ the model parameters are $(\psi^{(k)},\phi^{(k)})$.  We write
\begin{align}
  h^k &\;\triangleq\; h_{\psi^{(k)}},
  \qquad
  \phi^k \;\triangleq\; \phi^{(k)}, \\
  \Delta h^k &\;\triangleq\; h^{k} - h^{k-1},
  \qquad
  \Delta\phi^k \;\triangleq\; \phi^{k} - \phi^{k-1}.
\end{align}
For task $i$, the expected risk under $(\psi^{(k)},\phi^{(k)})$ is
denoted
\begin{equation}
  R_i^k \;\triangleq\; R_i(\psi^{(k)},\phi^{(k)}).
\end{equation}
Gradients of $R_i$ w.r.t.\ the scheduler at step $k$ are abbreviated as
\begin{equation}
  g_i^k \;\triangleq\;
  \nabla_{\phi} R_i(\psi^{(k)},\phi^{(k)}).
\end{equation}
The scheduler update direction at task $k$ is
\begin{equation}
  d^k \;\triangleq\;
  P_k \nabla_{\phi} L_k(\psi^{(k-1)},\phi^{(k-1)}),
\end{equation}
where $P_k$ is the orthogonal projector onto the routed LoRA subspace
for task $k$ and $L_k$ is the task$+$replay loss.  The effective update
is
\begin{equation}
  \Delta\phi^k
  \;=\;
  -\eta\,d^k,
  \qquad
  \eta>0.
  \label{eq:delta-phi-def}
\end{equation}

\paragraph{Forgetting measure and path length.}
For a past task $i\le K$, we define the forgetting at time $K$ as
\begin{equation}
  F_i^K
  \;\triangleq\;
  R_i^K - R_i^i.
  \label{eq:forgetting-def}
\end{equation}
We also introduce the cumulative affordance drift and scheduler
path length after task $i$:
\begin{align}
  B_h^{i:K}
  &\;\triangleq\;
  \sum_{k=i+1}^{K}
  \big\|\Delta h^k\big\|_{\mathrm{op}},
  \label{eq:Bh-def-compact}
  \\
  S_{i:K}
  &\;\triangleq\;
  \sum_{k=i+1}^{K}
  \big\|\Delta\phi^k\big\|_2^2.
  \label{eq:S-def-compact}
\end{align}
Here $\|\cdot\|_{\mathrm{op}}$ is the operator norm induced by the
Euclidean norm.

\paragraph{Assumptions.}
We adopt standard regularity conditions.

\begin{description}
  \item[(A1) Lipschitz loss and bounded predictions.]
  There exist $L_{\ell},B_f>0$ such that for any $y$ and predictions
  $\hat{y}_1,\hat{y}_2$,
  \begin{align}
    \big|
      \ell(\hat{y}_1,y) - \ell(\hat{y}_2,y)
    \big|
    &\le
    L_{\ell}\,|\hat{y}_1-\hat{y}_2|,
    \\
    |f_{\Theta}(V,q)|
    &\le
    B_f,
    \quad
    \forall (V,q).
  \end{align}

  \item[(A2) Lipschitz in the affordance operator.]
  There exists $L_h>0$ such that for any task $i$ and any
  $(\psi,\psi',\phi)$,
  \begin{equation}
    |R_i(\psi',\phi) - R_i(\psi,\phi)|
    \;\le\;
    L_h\,
    \big\|
      h_{\psi'} - h_{\psi}
    \big\|_{\mathrm{op}}.
    \label{eq:Lh-lip-compact}
  \end{equation}

  \item[(A3) Smoothness and bounded gradients in $\phi$.]
  For each $i$ and $\psi$, the map $\phi\mapsto R_i(\psi,\phi)$ is
  $L_{\phi}$-smooth:
  \begin{align}
    R_i(\psi,\phi')
    &\le
    R_i(\psi,\phi)
    +
    \langle g_i(\psi,\phi),\phi'-\phi\rangle
    \nonumber\\
    &\quad
    + \tfrac{L_{\phi}}{2}
      \|\phi'-\phi\|_2^2,
    \label{eq:Lphi-smooth-compact}
  \end{align}
  and gradients are uniformly bounded:
  \begin{equation}
    \|g_i(\psi,\phi)\|_2 \le G_{\phi},
    \quad
    \forall i,(\psi,\phi).
    \label{eq:Gphi-bound}
  \end{equation}

  \item[(A4) Conflict-aware routing.]
  The question-only replay and conflict-aware router in AFD are designed
  so that the update direction $d^k$ is not strongly anti-aligned with
  any past task gradient.  We encode this via a cosine bound: there
  exists $\rho\in[0,1)$ such that for all $k>i$,
  \begin{equation}
    \langle g_i^{k-1}, d^k\rangle
    \;\ge\;
    -\rho\,
    \|g_i^{k-1}\|_2\,
    \|d^k\|_2.
    \label{eq:rho-conflict-compact}
  \end{equation}
\end{description}

\subsection{Single-Step Effect on a Past Task}

We first bound the change in $R_i$ incurred by a single update at step
$k>i$.  To keep notation uncluttered, set
\begin{equation}
  \begin{split}
    h^{k-1} &= h^{\mathrm{old}},
    \quad
    h^k = h^{\mathrm{new}},
    \\
    \phi^{k-1} &= \phi^{\mathrm{old}},
    \quad
    \phi^k = \phi^{\mathrm{new}}.
  \end{split}
\end{equation}

\begin{lemma}[Single-step bound]
\label{lem:single-step-compact}
Under (A1)--(A4), for any $k>i$,
\begin{align}
  R_i^k - R_i^{k-1}
  &\le
  L_h\,
  \big\|
    \Delta h^k
  \big\|_{\mathrm{op}}
  \nonumber\\
  &\quad
  + \rho\,G_{\phi}\,
    \|\Delta\phi^k\|_2
  + \frac{L_{\phi}}{2\eta}\,
    \|\Delta\phi^k\|_2^2.
  \label{eq:single-step-compact}
\end{align}
\end{lemma}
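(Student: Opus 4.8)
The plan is to telescope the one-step change through a hybrid parameter pair, isolating a pure affordance-head move and a pure scheduler move, and then to bound each with the assumption tailored to it. Writing the intermediate point as $(\psi^{(k-1)},\phi^{(k)})$, I would start from
\begin{align*}
  R_i^k - R_i^{k-1}
  &= \big[\,R_i(\psi^{(k)},\phi^{(k)}) - R_i(\psi^{(k-1)},\phi^{(k)})\,\big] \\
  &\quad + \big[\,R_i(\psi^{(k-1)},\phi^{(k)}) - R_i(\psi^{(k-1)},\phi^{(k-1)})\,\big].
\end{align*}
The first bracket varies only $\psi$ at the frozen scheduler state $\phi^{(k)}$, so (A2) bounds it by $L_h\,\|h_{\psi^{(k)}}-h_{\psi^{(k-1)}}\|_{\mathrm{op}} = L_h\,\|\Delta h^k\|_{\mathrm{op}}$. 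The reason for changing $\psi$ \emph{first} (equivalently, inserting $(\psi^{(k-1)},\phi^{(k)})$ rather than $(\psi^{(k)},\phi^{(k-1)})$) is that the leftover term is then a scheduler-only increment anchored at $\psi^{(k-1)}$, which is exactly the base point at which the gradient $g_i^{k-1} = \nabla_\phi R_i(\psi^{(k-1)},\phi^{(k-1)})$ and the conflict condition (A4) are phrased.

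For the second bracket I would apply the descent-lemma form of (A3) with $\psi$ pinned at $\psi^{(k-1)}$, giving $R_i(\psi^{(k-1)},\phi^{(k)}) - R_i(\psi^{(k-1)},\phi^{(k-1)}) \le \langle g_i^{k-1},\Delta\phi^k\rangle + \tfrac{L_\phi}{2}\|\Delta\phi^k\|_2^2$. Substituting $\Delta\phi^k = -\eta\,d^k$ from \eqref{eq:delta-phi-def} turns the linear term into $-\eta\langle g_i^{k-1},d^k\rangle$, and the cosine bound (A4) yields $-\eta\langle g_i^{k-1},d^k\rangle \le \eta\rho\,\|g_i^{k-1}\|_2\|d^k\|_2 = \rho\,\|g_i^{k-1}\|_2\,\|\Delta\phi^k\|_2 \le \rho\,G_\phi\,\|\Delta\phi^k\|_2$, the last step using the uniform gradient bound in (A3). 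For the quadratic term, $\|\Delta\phi^k\|_2^2 = \eta^2\|d^k\|_2^2$ gives $\tfrac{L_\phi}{2}\|\Delta\phi^k\|_2^2 = \tfrac{L_\phi\eta}{2}\|d^k\|_2^2 \le \tfrac{L_\phi}{2\eta}\|\Delta\phi^k\|_2^2$ in the step-size regime $\eta\le 1$ used throughout, which produces the coefficient $\tfrac{L_\phi}{2\eta}$ in \eqref{eq:single-step-compact}. Adding the two bracket bounds gives the claim.

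The only genuinely delicate step is the base-point bookkeeping just described: (A4) controls the inner product only for the specific pair $(g_i^{k-1}, d^k)$, so the telescoping must be arranged so that the scheduler-only increment has linear part exactly $\langle g_i^{k-1},\Delta\phi^k\rangle$; the alternative ordering would leave $\nabla_\phi R_i$ evaluated at $\psi^{(k)}$, which (A4) does not cover without an extra cross-Lipschitz estimate. A minor point worth stating explicitly is that (A2) is a two-sided Lipschitz bound, hence it upper-bounds the \emph{signed} first-bracket difference (not just its absolute value), and (A1) is what guarantees $R_i$ is finite and the expansions are well posed; beyond these, the argument is routine.
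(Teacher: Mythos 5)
Your proof is correct and follows essentially the same route as the paper: the same hybrid-point split through $(\psi^{(k-1)},\phi^{(k)})$, (A2) for the representation term, and the descent lemma from (A3) combined with (A4) and $\Delta\phi^k=-\eta d^k$ for the scheduler term. If anything, your $\eta$ bookkeeping is tidier than the paper's own proof, which carries a spurious $1/\eta$ on the linear term before stating the lemma's $\rho\,G_{\phi}\|\Delta\phi^k\|_2$, whereas you cancel $\eta$ correctly and explicitly note that passing from $L_{\phi}/2$ to $L_{\phi}/(2\eta)$ on the quadratic term requires $\eta\le 1$.
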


\begin{proof}
We split the increment into a representation part and a scheduler part:
\begin{align}
  R_i^k - R_i^{k-1}
  &=
  \big[
    R_i(h^{\mathrm{new}},\phi^{\mathrm{new}})
    - R_i(h^{\mathrm{old}},\phi^{\mathrm{new}})
  \big]
  \nonumber\\
  &\quad
  +
  \big[
    R_i(h^{\mathrm{old}},\phi^{\mathrm{new}})
    - R_i(h^{\mathrm{old}},\phi^{\mathrm{old}})
  \big].
  \label{eq:delta-split-compact}
\end{align}

\emph{Representation term.}
By \eqref{eq:Lh-lip-compact},
\begin{equation}
  R_i(h^{\mathrm{new}},\phi^{\mathrm{new}})
  - R_i(h^{\mathrm{old}},\phi^{\mathrm{new}})
  \le
  L_h\,
  \|\Delta h^k\|_{\mathrm{op}}.
  \label{eq:rep-term-compact}
\end{equation}

\emph{Scheduler term.}
Apply \eqref{eq:Lphi-smooth-compact} with
$(\psi,\phi,\phi')=(\psi^{(k-1)},\phi^{\mathrm{old}},\phi^{\mathrm{new}})$:
\begin{align}
  R_i(h^{\mathrm{old}},\phi^{\mathrm{new}})
  &\le
  R_i(h^{\mathrm{old}},\phi^{\mathrm{old}})
  + \langle g_i^{k-1},\Delta\phi^k\rangle
  \nonumber\\
  &\quad
  + \tfrac{L_{\phi}}{2}
    \|\Delta\phi^k\|_2^2.
  \label{eq:sched-smooth-compact}
\end{align}
Rearranging,
\begin{align}
  &
  R_i(h^{\mathrm{old}},\phi^{\mathrm{new}})
  - R_i(h^{\mathrm{old}},\phi^{\mathrm{old}})
  \nonumber\\
  &\quad\le
  \langle g_i^{k-1},\Delta\phi^k\rangle
  + \tfrac{L_{\phi}}{2}
    \|\Delta\phi^k\|_2^2.
  \label{eq:sched-term-pre-conflict}
\end{align}
Using \eqref{eq:delta-phi-def} and
$\Delta\phi^k=-\eta d^k$, equation
\eqref{eq:rho-conflict-compact} gives
\begin{align}
  \langle g_i^{k-1},\Delta\phi^k\rangle
  &= -\eta\,\langle g_i^{k-1},d^k\rangle
  \nonumber\\
  &\le
  \eta\,
  \rho\,\|g_i^{k-1}\|_2\,\|d^k\|_2
  \nonumber\\
  &\le
  \tfrac{\rho\,G_{\phi}}{\eta}\,
  \|\Delta\phi^k\|_2,
  \label{eq:conflict-to-delta}
\end{align}
where we used $\|d^k\|_2=\|\Delta\phi^k\|_2/\eta$ in the last step.
Substituting \eqref{eq:conflict-to-delta} into
\eqref{eq:sched-term-pre-conflict} yields
\begin{align}
  &
  R_i(h^{\mathrm{old}},\phi^{\mathrm{new}})
  - R_i(h^{\mathrm{old}},\phi^{\mathrm{old}})
  \nonumber\\
  &\quad\le
  \tfrac{\rho\,G_{\phi}}{\eta}\,
  \|\Delta\phi^k\|_2
  + \tfrac{L_{\phi}}{2}
    \|\Delta\phi^k\|_2^2.
  \label{eq:sched-term-compact}
\end{align}
Combining \eqref{eq:rep-term-compact} and
\eqref{eq:sched-term-compact} with
\eqref{eq:delta-split-compact} gives
\eqref{eq:single-step-compact}. \hfill $\square$
\end{proof}

\subsection{Task-wise Forgetting Bound}

We now sum Lemma~\ref{lem:single-step-compact} over all updates after
task $i$ and control the linear path-length term via
Cauchy--Schwarz.

\begin{theorem}[Task-wise forgetting]
\label{thm:forgetting-compact}
Under \textnormal{(A1)--(A4)}, for any $1\le i\le K$,
\begin{align}
  F_i^K
  &\le
  L_h\,
  B_h^{i:K}
  \nonumber\\
  &\quad
  + \rho\,G_{\phi}\,
    \sqrt{K-i}\,
    \sqrt{S_{i:K}}
  + \frac{L_{\phi}}{2\eta}\,
    S_{i:K}.
  \label{eq:forgetting-bound-compact}
\end{align}
\end{theorem}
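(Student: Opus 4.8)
The plan is to telescope the forgetting quantity and apply Lemma~\ref{lem:single-step-compact} summand by summand. First I would write
\begin{equation}
  F_i^K = R_i^K - R_i^i = \sum_{k=i+1}^{K}\bigl(R_i^k - R_i^{k-1}\bigr),
\end{equation}
which holds because the right-hand side telescopes. Every index $k=i+1,\dots,K$ corresponds to an update that occurs strictly after task $i$, so the hypothesis $k>i$ of Lemma~\ref{lem:single-step-compact} is satisfied for each term.

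Second, I would substitute the single-step estimate \eqref{eq:single-step-compact} into each summand, which splits the bound into three groups. The representation group sums to $L_h\sum_{k=i+1}^{K}\|\Delta h^k\|_{\mathrm{op}} = L_h\,B_h^{i:K}$ by the definition \eqref{eq:Bh-def-compact}, and the quadratic scheduler group sums to $\tfrac{L_\phi}{2\eta}\sum_{k=i+1}^{K}\|\Delta\phi^k\|_2^2 = \tfrac{L_\phi}{2\eta}\,S_{i:K}$ by the definition \eqref{eq:S-def-compact}. Both identifications are immediate and require no further work.

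Third — and this is the only step that is not pure bookkeeping — I would control the linear scheduler group $\rho\,G_\phi\sum_{k=i+1}^{K}\|\Delta\phi^k\|_2$. Since the target bound \eqref{eq:forgetting-bound-compact} is phrased through the \emph{squared} path length $S_{i:K}$, I must convert a sum of norms into the square root of a sum of squared norms. This is exactly Cauchy--Schwarz (equivalently, the $\ell_1$--$\ell_2$ power-mean inequality over the $K-i$ post-task-$i$ indices):
\begin{equation}
  \sum_{k=i+1}^{K}\|\Delta\phi^k\|_2 \;\le\; \sqrt{K-i}\;\Bigl(\sum_{k=i+1}^{K}\|\Delta\phi^k\|_2^2\Bigr)^{1/2} \;=\; \sqrt{K-i}\,\sqrt{S_{i:K}}.
\end{equation}
Multiplying by $\rho\,G_\phi$ and adding the three groups produces precisely \eqref{eq:forgetting-bound-compact}.

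The main obstacle here is accounting rather than analysis: one must verify that the range $k=i+1,\dots,K$ contributes exactly $K-i$ terms, so that the Cauchy--Schwarz prefactor is $\sqrt{K-i}$ (not $\sqrt{K-i+1}$), and check that the telescoping introduces no residual cross terms. The qualitative content — that the linear term carries the benign coefficient $\rho<1$ rather than an adverse sign — is already baked into Lemma~\ref{lem:single-step-compact} via Assumption (A4), so no further appeal to (A1)--(A4) is needed at the summation stage.
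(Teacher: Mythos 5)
Your proposal is correct and follows essentially the same route as the paper's proof: telescope $F_i^K$ over $k=i+1,\dots,K$, apply Lemma~\ref{lem:single-step-compact} term-wise, identify the drift and quadratic sums with $B_h^{i:K}$ and $S_{i:K}$, and handle the linear term by Cauchy--Schwarz over the $K-i$ indices to obtain the $\sqrt{K-i}\,\sqrt{S_{i:K}}$ factor. Your remark that (A4) enters only through the lemma and that no further use of the assumptions is needed at the summation stage matches the paper exactly.
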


\begin{proof}
By telescoping,
\begin{align}
  F_i^K
  &= R_i^K - R_i^i
  \nonumber\\
  &= \sum_{k=i+1}^{K}
     \big(R_i^k - R_i^{k-1}\big).
  \label{eq:forgetting-telescope-compact}
\end{align}
Applying Lemma~\ref{lem:single-step-compact} term-wise,
\begin{align}
  F_i^K
  &\le
  \sum_{k=i+1}^{K}
  L_h\,\|\Delta h^k\|_{\mathrm{op}}
  \nonumber\\
  &\quad
  + \rho\,G_{\phi}
    \sum_{k=i+1}^{K}
    \|\Delta\phi^k\|_2
  + \frac{L_{\phi}}{2\eta}
    \sum_{k=i+1}^{K}
    \|\Delta\phi^k\|_2^2.
\end{align}
The first sum is exactly $L_h B_h^{i:K}$.  For the second, Cauchy--Schwarz
yields
\begin{align}
  \sum_{k=i+1}^{K}\|\Delta\phi^k\|_2
  &\le
  \sqrt{K-i}\,
  \Big(
    \sum_{k=i+1}^{K}\|\Delta\phi^k\|_2^2
  \Big)^{1/2}
  \nonumber\\
  &= \sqrt{K-i}\,\sqrt{S_{i:K}}.
  \label{eq:path-cs-compact}
\end{align}
The last sum is $S_{i:K}$ by definition.  Substituting these into
\eqref{eq:forgetting-telescope-compact} gives
\eqref{eq:forgetting-bound-compact}.\hfill $\square$
\end{proof}

The bound \eqref{eq:forgetting-bound-compact} separates three
contributions:
\begin{itemize}
  \item $L_h B_h^{i:K}$: forgetting due to \emph{affordance drift},
        controlled by the stability loss $\mathcal{L}_{\text{aff}}$.
  \item $\rho\,G_{\phi}\sqrt{K-i}\sqrt{S_{i:K}}$:
        a \emph{first-order interference term} coupling the worst-case
        negative cosine $\rho$ with the scheduler path length
        $S_{i:K}$.
  \item $(L_{\phi}/(2\eta)) S_{i:K}$:
        a \emph{second-order curvature term} that is small when the
        loss is smooth and updates are moderate.
\end{itemize}
In the ideal regime where (i) the affordance head is nearly frozen
($B_h^{i:K}\approx 0$) and (ii) conflict-aware routing makes gradient
subspaces almost orthogonal ($\rho\approx 0$), the dominant term is the
curvature term, which vanishes as the effective path length
$S_{i:K}$ shrinks.

\subsection{Regret of the Routed Scheduler}

We briefly relate the scheduler dynamics to online regret.  Consider the
sequence of convex losses
\begin{equation}
  \ell_k(\phi)
  \;\triangleq\;
  R_k(\psi^{(k-1)},\phi),
\end{equation}
and the projected OGD update
\begin{equation}
  \phi^k
  \;=\;
  \Pi\big(\phi^{k-1} - \eta d^k\big),
\end{equation}
where $\Pi$ is projection onto a convex set of diameter at most $D$.
For a comparator $\phi^{\star}$, the static regret is
\begin{equation}
  \mathrm{Reg}_{K}
  \;\triangleq\;
  \sum_{k=1}^{K}\ell_k(\phi^{k-1})
  -
  \sum_{k=1}^{K}\ell_k(\phi^{\star}).
\end{equation}

\begin{proposition}[Scheduler regret]
\label{prop:regret-compact}
Assume each $\ell_k$ is convex, gradients are bounded by $G_{\phi}$, and
the feasible set has diameter $D$.  If
$\eta = D/(G_{\phi}\sqrt{K})$, then
\begin{equation}
  \mathrm{Reg}_{K}
  \;\le\;
  D G_{\phi}\sqrt{K},
\end{equation}
and the squared path length satisfies
$S_{0:K} \le D^2$.
\end{proposition}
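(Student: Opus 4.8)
The plan is to recognize Proposition~\ref{prop:regret-compact} as the classical projected online (sub)gradient descent regret bound, instantiated with the scheduler surrogate losses $\ell_k(\phi)=R_k(\psi^{(k-1)},\phi)$ and update direction $d^k$. The only AFD-specific modeling step is to treat $d^k=P_k\nabla_\phi L_k(\psi^{(k-1)},\phi^{(k-1)})$ as a (sub)gradient of $\ell_k$ at $\phi^{k-1}$ within the feasible set of diameter $D$; i.e. the routed LoRA subspace carries the coordinates along which $\ell_k$ actually varies, so $P_k$ acts as the identity there and the convexity assumption on $\ell_k$ applies to $d^k$. Once this identification is stated as part of the setup (consistent with the section being explicitly ``stylized''), the rest is textbook.

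First I would fix an arbitrary comparator $\phi^{\star}$ in the feasible set and track the potential $\tfrac12\|\phi^{k-1}-\phi^{\star}\|_2^2$. Using nonexpansiveness of $\Pi$ and feasibility of $\phi^{\star}$, expanding $\|\phi^k-\phi^{\star}\|_2^2\le\|\phi^{k-1}-\eta d^k-\phi^{\star}\|_2^2$ gives the one-step inequality $\langle d^k,\phi^{k-1}-\phi^{\star}\rangle\le \tfrac{1}{2\eta}\big(\|\phi^{k-1}-\phi^{\star}\|_2^2-\|\phi^k-\phi^{\star}\|_2^2\big)+\tfrac{\eta}{2}\|d^k\|_2^2$. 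Convexity of $\ell_k$ yields $\ell_k(\phi^{k-1})-\ell_k(\phi^{\star})\le\langle d^k,\phi^{k-1}-\phi^{\star}\rangle$, so summing over $k=1,\dots,K$ telescopes the first term to at most $D^2/(2\eta)$ (via $\|\phi^0-\phi^{\star}\|_2\le D$) and bounds the second by $\tfrac{\eta}{2}K G_\phi^2$ (via $\|d^k\|_2\le G_\phi$). This produces $\mathrm{Reg}_{K}\le D^2/(2\eta)+\eta G_\phi^2 K/2$, and substituting $\eta=D/(G_\phi\sqrt{K})$ balances the two halves, each equal to $\tfrac12 D G_\phi\sqrt{K}$, for the stated $D G_\phi\sqrt{K}$ bound.

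For the path-length claim I would again invoke nonexpansiveness of $\Pi$: since $\phi^{k-1}$ is feasible, $\|\phi^k-\phi^{k-1}\|_2=\|\Pi(\phi^{k-1}-\eta d^k)-\Pi(\phi^{k-1})\|_2\le\eta\|d^k\|_2\le\eta G_\phi$, hence $S_{0:K}=\sum_{k=1}^K\|\Delta\phi^k\|_2^2\le K\eta^2G_\phi^2=D^2$ by the choice of $\eta$. This also closes the loop with Theorem~\ref{thm:forgetting-compact}, since its curvature term scales with $S_{i:K}\le S_{0:K}\le D^2$.

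I expect the only genuine obstacle to be conceptual rather than computational: justifying the identification of the routed direction $d^k$ with a subgradient of the online surrogate $\ell_k$ at $\phi^{k-1}$, and keeping the roles of $\psi^{(k-1)}$ versus $\psi^{(k)}$ straight so that $\ell_k$ is honestly a function of $\phi$ alone (the affordance head is held frozen from the scheduler's point of view during step $k$). A short remark making these the standing conventions of the subsection suffices; everything afterward — nonexpansive projection, telescoping, the step-size balancing — is routine, so I would not grind through it in detail.
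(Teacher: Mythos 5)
Your proposal is correct and follows essentially the same route as the paper: the standard projected-OGD potential argument with nonexpansive projection, convexity, telescoping, the $D^2/(2\eta)+\eta G_\phi^2K/2$ bound, and the balancing step size $\eta=D/(G_\phi\sqrt{K})$. Your explicit per-step bound $\|\Delta\phi^k\|_2\le\eta G_\phi$ yielding $S_{0:K}\le K\eta^2G_\phi^2=D^2$ is in fact a cleaner justification of the paper's brief closing remark about the path length, and your note on treating $d^k$ as a (sub)gradient of $\ell_k$ at $\phi^{k-1}$ is exactly the implicit convention the paper relies on.
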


\begin{proof}
This is the standard analysis of projected OGD.  The key inequality is
\begin{align}
  &
  \|\phi^k - \phi^{\star}\|_2^2
  \nonumber\\
  &\le
  \|\phi^{k-1} - \phi^{\star}\|_2^2
  - 2\eta\,
    \langle\nabla\ell_k(\phi^{k-1}),\phi^{k-1}-\phi^{\star}\rangle
  \nonumber\\
  &\quad
  + \eta^2 G_{\phi}^2,
\end{align}
which, after rearranging, using convexity, and summing over $k$, yields
\begin{equation}
  \mathrm{Reg}_{K}
  \le
  \frac{\|\phi^0-\phi^{\star}\|_2^2}{2\eta}
  +
  \frac{\eta}{2} G_{\phi}^2 K
  \;\le\;
  \frac{D^2}{2\eta}
  + \frac{\eta}{2} G_{\phi}^2 K.
\end{equation}
Choosing $\eta = D/(G_{\phi}\sqrt{K})$ minimizes the RHS and gives the
claimed regret bound; with this choice the total movement is at most
$D$, hence $S_{0:K}\le D^2$.\hfill $\square$
\end{proof}

Combining Theorem~\ref{thm:forgetting-compact} and
Proposition~\ref{prop:regret-compact}, we see that the same mechanisms
that control the scheduler's online regret---bounded gradients and
short path length in routed low-rank subspaces---also control
catastrophic forgetting once the affordance substrate is stabilized.
This provides a theoretical underpinning for the design of AFD: by
confining plasticity to conflict-aware low-rank updates and making the
affordance head slowly varying, AFD simultaneously enjoys low regret on
a nonstationary stream and tight forgetting guarantees on past tasks.

\section{Additional results}

\subsection{Hyperparameter Sensitivity}
\label{sec:sensitivity}
We study the five most important hyperparameters while keeping all others fixed. As shown in Figure~\ref{fig:sensitivity}: 
\textbf{(i) Broad plateaus.} Across wide ranges, performance remains essentially flat. DI-Avg varies by only \(\approx 0.2\text{--}0.4\%\) and \textit{Forget} stays around \(-1.8\) to \(-2.0\). \textbf{(ii) Edge effects.} At the boundaries, very sparse mixtures (\(L{=}1\)) or overly permissive temperatures (\(\tau{=}0.10\)), shallow routing (\(|\mathcal{S}|{=}3\)), tight capacity (\(r_{\max}{=}32\)), or aggressive growth triggers (\(\tau_c{=}0.10\)) cause small but consistent drops and slightly higher forgetting.

\begin{figure*}[ht]
	\centering
	\includegraphics[width=\linewidth]{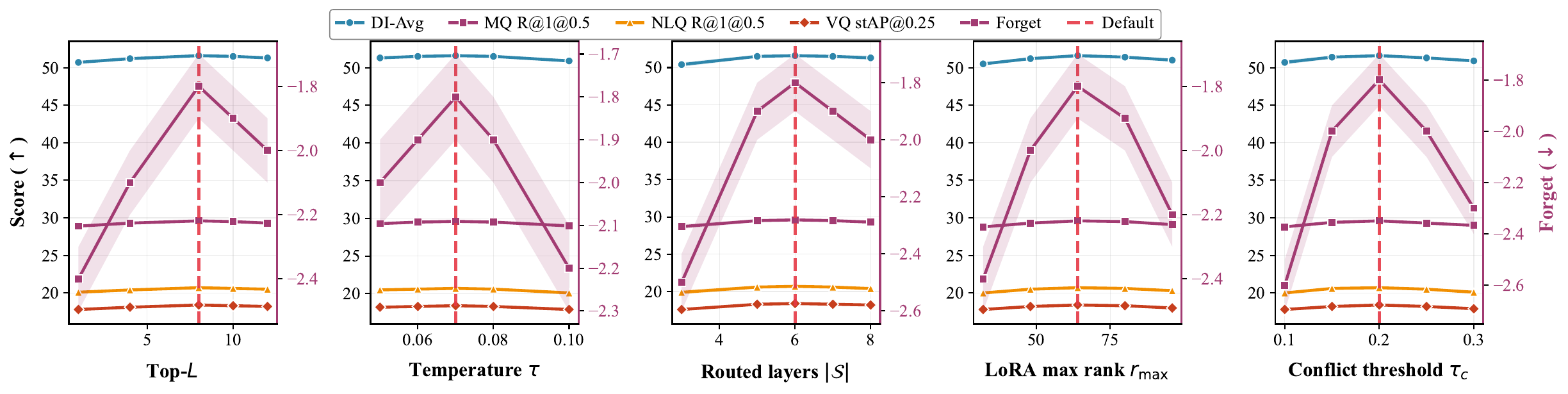}
	\caption{Sensitivity on the five key hyperparameters. Defaults marked with \(^{\star}\). AFD remains stable across broad ranges; extremes show mild degradation.}
	\label{fig:sensitivity}
\end{figure*}

\subsection{Drift--Forgetting Diagnostics for the Affordance Substrate}
\label{sec:drift-forgetting}

To empirically validate the theoretical decomposition in
Sec.~\ref{subsec:theory}, we study how task-wise forgetting correlates
with the drift of the shared affordance head across tasks.  We focus on
the 6-step domain-incremental VideoQA protocol
and the 4-slice time-incremental iVQA protocol (S1$\rightarrow$S4),
yielding $10$ tasks in total.  For each task $i$ and method, we compute:

\begin{itemize}
\item an empirical proxy for the cumulative affordance drift
      $\widehat B_{h,i}$; and
\item the magnitude of forgetting $\widehat F_i$ on that task.
\end{itemize}

We report results for Full AFD and three key ablations:
\ding{182} w/o affordance tokens,
\ding{186} w/o ASR alignment in $\mathcal{L}_{\text{aff}}$, and
\ding{185} w/o question-only replay.
For each task $i$, we approximate the cumulative affordance drift as a
prototype-level cosine distance across successive tasks.  Let
$\mathcal{P}$ denote the set of affordance prototypes, and let
$\mu_p^{(k)}\in\mathbb{R}^{d_a}$ be the learned embedding of prototype
$p\in\mathcal{P}$ after training on task $k$.  We define
\begin{equation}
  \widehat B_{h,i}
  \;\triangleq\;
  \sum_{k=i+1}^{K}
  \frac{1}{|\mathcal{P}|}
  \sum_{p\in\mathcal{P}}
  \Big(
    1 - \cos\big(\mu_p^{(k)},\mu_p^{(k-1)}\big)
  \Big),
  \label{eq:Bh-practical}
\end{equation}
which serves as a task-wise surrogate for the operator-norm drift
$B_h^{i:K}$ in Eq.~\eqref{eq:Bh-def-compact}.  Larger
$\widehat B_{h,i}$ indicates a less stable affordance substrate for
task $i$.

Following the standard CL literature, we define forgetting on task $i$
as the drop between its best and final performance.  Let
$A_i^{\max}$ be the maximum validation accuracy observed on task $i$
over the training trajectory, and let $A_i^{\text{final}}$ be the
accuracy after finishing all $K$ tasks.  We then use the magnitude
\begin{equation}
  \widehat F_i
  \;\triangleq\;
  A_i^{\max} - A_i^{\text{final}},
  \label{eq:forget-mag-practical}
\end{equation}
which is non-negative and directly comparable across methods
(smaller is better).

For each of the $10$ tasks (6 domain-incremental datasets and 4
time-slices), we collect $(\widehat B_{h,i},\widehat F_i)$ pairs for:

\begin{itemize}
  \item \textbf{AFD (full)} --- our full model;
  \item \textbf{\ding{182} w/o affordance tokens} --- direct frame tokens
        to the LLM scheduler;
  \item \textbf{\ding{186} w/o ASR alignment} --- $\beta=0$ in
        $\mathcal{L}_{\text{aff}}$ (teacher consistency only);
  \item \textbf{\ding{185} w/o replay} --- $\lambda_{\text{rep}}=0$
        (no question-only replay).
\end{itemize}

This yields $40$ points in the $(\widehat B_{h,i},\widehat F_i)$ plane.
We additionally approximate the scheduler path-length proxy
$\widehat S_i$ by accumulating the squared Frobenius norm of LoRA
updates on task $i$, in line with $S_{i:K}$ in
Eq.~\eqref{eq:S-def-compact}, and use it in a 3D diagnostic plot.

\begin{figure}[ht]
  \centering
  \includegraphics[width=0.6\linewidth]{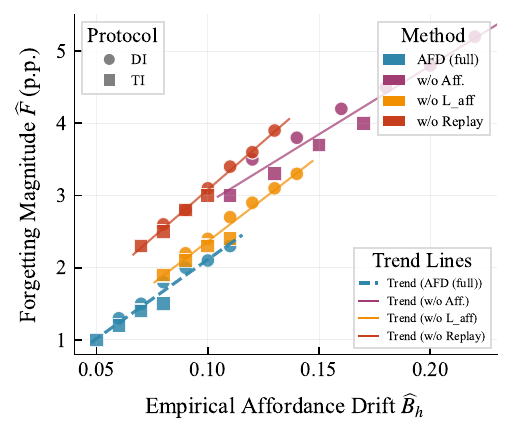}
  \caption{\textbf{Affordance drift vs.\ forgetting.}
  Each point is one task $i$ from the 6-step domain-incremental
  VideoQA or 4-slice time-incremental iVQA protocol, colored by method.}
  \label{fig:drift-forget-2d}
\end{figure}

\begin{figure}[ht]
  \centering
  \includegraphics[width=0.6\linewidth]{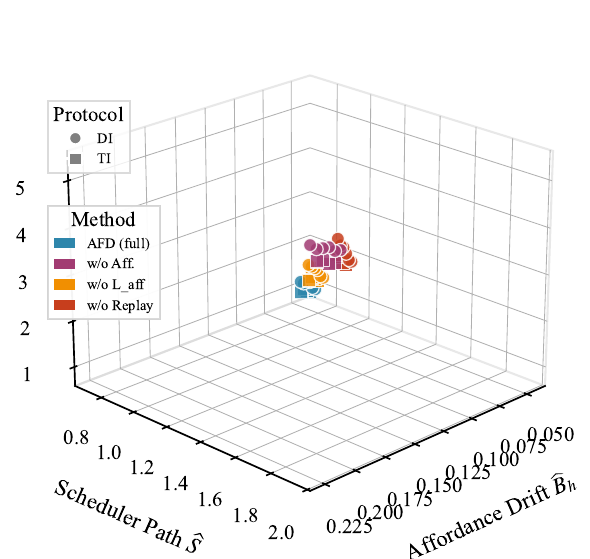}
  \caption{\textbf{Joint geometry of drift, scheduler path-length, and
  forgetting.}  3D scatter of $(\widehat B_{h,i},\widehat S_i,
  \widehat F_i)$ across tasks and methods.  Full AFD occupies the
  lower-left region (small drift, small path-length, small forgetting),
  while ablations move along both the drift and path-length axes toward
  higher forgetting, consistent with the theoretical bound in
  Eq.~\eqref{eq:forgetting-bound-compact}.}
  \label{fig:drift-path-forget-3d}
\end{figure}

Fig.~\ref{fig:drift-forget-2d} shows the 2D scatter of
$(\widehat B_{h,i},\widehat F_i)$ across tasks and methods, along with
least-squares fitted lines per method.  Points are colored by method and
annotated by task.  Fig.~\ref{fig:drift-path-forget-3d} extends this to
a 3D view, where each point encodes
$(\widehat B_{h,i},\widehat S_i,\widehat F_i)$ for a given task and
method.
We highlight three observations:
\begin{itemize}
  \item \emph{Drift--forgetting alignment.}
        Across all methods, tasks with larger
        $\widehat B_{h,i}$ exhibit systematically larger
        $\widehat F_i$, with a markedly tighter correlation for AFD and
        its variants.  Full AFD achieves the smallest drift and the
        lowest forgetting on most tasks, supporting the
        ``stable-substrate'' design.
  \item \emph{Effect of affordance ablations.}
        Removing affordance tokens (\ding{182}) shifts points toward
        the high-drift/high-forgetting corner, and both removing ASR
        alignment (\ding{186}) and replay (\ding{185}) produce
        intermediate degradations.  This mirrors the theoretical
        decomposition: weakening $\mathcal{L}_{\text{aff}}$ increases
        $B_h$, while removing replay affects the scheduler path-length
        and its interference.
  \item \emph{Joint geometry with scheduler path-length.}
        In Fig.~\ref{fig:drift-path-forget-3d}, methods form a
        structured manifold: full AFD lies near the origin in both
        $\widehat B_{h,i}$ and $\widehat S_i$, whereas ablations move
        along both directions toward larger forgetting, qualitatively
        matching the bound in Eq.~\eqref{eq:forgetting-bound-compact}.
\end{itemize}

\subsection{Scheduler Path-Length vs Forgetting and Backward Transfer}
\label{sec:path-length-forgetting}

The second term in Theorem~\ref{thm:forgetting-compact} shows that the
scheduler path-length $S_{i:K}$---i.e., the cumulative norm of
low-rank updates---directly contributes to forgetting.  We now
empirically probe this effect by correlating a practical proxy
$\widehat S^{(m)}$ with method-level forgetting and backward transfer
(BWT) in the domain-incremental VideoQA setting.

For a given method $m$ trained on the 6-step domain-incremental
VideoQA sequence, we denote by
$\{\phi^{k,(m)}\}_{k=0}^{K}$ the scheduler parameters after each
task, and by
$\Delta\phi^{k,(m)} = \phi^{k,(m)} - \phi^{k-1,(m)}$ the effective
update at step $k$.  We define the method-level path-length proxy as
\begin{equation}
  \widehat S^{(m)}
  \;\triangleq\;
  \sum_{k=1}^{K}
  \big\|
    \Delta\phi^{k,(m)}
  \big\|_F^2,
  \label{eq:Sm-proxy}
\end{equation}
which is an empirical counterpart of $S_{i:K}$ in
Eq.~\eqref{eq:S-def-compact}.  Larger $\widehat S^{(m)}$ indicates a
longer path in parameter space.

For forgetting, we aggregate the standard CL metric over tasks.  Let
$A_i^{\max,(m)}$ be the best validation accuracy achieved on task $i$
for method $m$, and $A_i^{\text{final},(m)}$ the accuracy on task $i$
after training all $K$ tasks.  We define the method-level forgetting
magnitude
\begin{equation}
  |\widehat F^{(m)}|
  \;\triangleq\;
  \frac{1}{K}
  \sum_{i=1}^{K}
  \big(
    A_i^{\max,(m)} - A_i^{\text{final},(m)}
  \big),
  \label{eq:Fm-proxy}
\end{equation}
in percentage points (p.p.).  Similarly, we approximate the backward
transfer as
\begin{equation}
  \widehat{\mathrm{BWT}}^{(m)}
  \;\triangleq\;
  \frac{1}{K-1}
  \sum_{i=1}^{K-1}
  \big(
    A_i^{\text{final},(m)} - A_i^{\max,(m)}
  \big),
\end{equation}
and use its magnitude
$|\widehat{\mathrm{BWT}}^{(m)}|$ (smaller is better) for plotting.

We instantiate this diagnostic on the domain-incremental VideoQA
experiment (6 datasets) for the following methods, each averaged over
three random seeds:

\begin{itemize}
  \item \textbf{AFD (full)} --- full model;
  \item \textbf{w/o Router} --- uniform adapter mixing
        (no instance-wise routing, cf.\ \ding{183});
  \item \textbf{Fixed Rank} --- LoRA ranks fixed at $r{=}8$
        (no rank growth, cf.\ \ding{184});
  \item \textbf{w/o Replay} --- $\lambda_{\text{rep}}{=}0$
        (no question-only replay, cf.\ \ding{185}).
\end{itemize}

For each (method, seed) pair we record the proxy path-length
$\widehat S^{(m)}$, the average forgetting magnitude
$|\widehat F^{(m)}|$, the average backward transfer magnitude
$|\widehat{\mathrm{BWT}}^{(m)}|$, and the average accuracy.

\begin{figure}[ht]
  \centering
  \includegraphics[width=0.6\linewidth]{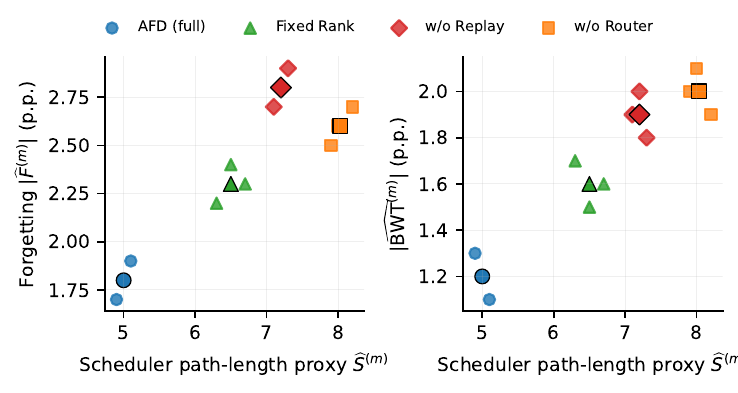}
  \caption{\textbf{Scheduler path-length vs.\ forgetting and BWT.}
  Each point corresponds to one seed of a given method on the
  domain-incremental VideoQA protocol. }
  \label{fig:path-forgetting}
\end{figure}

Fig.~\ref{fig:path-forgetting} summarizes the relationship between
scheduler path-length and stability.  The left panel plots forgetting
versus $\widehat S^{(m)}$, while the right panel plots
$|\widehat{\mathrm{BWT}}^{(m)}|$ versus $\widehat S^{(m)}$.
Three trends emerge:
\begin{itemize}
  \item \emph{Monotone S--F trade-off.}
        Across methods, larger $\widehat S^{(m)}$ correlates with
        larger forgetting magnitudes $|\widehat F^{(m)}|$.  Full AFD
        occupies the lower-left corner (mean
        $\widehat S^{(\text{AFD})}\approx 5.0$,
        $|\widehat F^{(\text{AFD})}|\approx 1.8$ p.p.),
        while w/o Router and w/o Replay lie around
        $\widehat S^{(m)}\approx 7.8\text{--}8.1$ with
        $|\widehat F^{(m)}|\approx 2.6\text{--}2.8$ p.p.,
        consistent with the path-length term in
        Eq.~\eqref{eq:forgetting-bound-compact}.
  \item \emph{Router and rank growth reduce path-length.}
        Disabling the router or rank growth forces the scheduler to
        traverse longer paths in parameter space to fit the same task
        stream.  The Fixed-Rank variant exhibits intermediate
        $\widehat S^{(m)}$ and forgetting, suggesting that conflict-aware
        rank expansion indeed acts as a ``shortest-path'' mechanism in
        the routed subspace.
  \item \emph{BWT follows the same geometry.}
        The right panel shows that $|\widehat{\mathrm{BWT}}^{(m)}|$
        is smallest for full AFD and grows with $\widehat S^{(m)}$
        for the ablations, indicating that shorter, conflict-aware
        trajectories not only reduce forgetting but also keep backward
        interference small.
\end{itemize}

Together with the drift diagnostics in Sec.~\ref{sec:drift-forgetting},
this experiment empirically supports the theoretical view that AFD
controls catastrophic forgetting through both a stable affordance
subspace and a geometrically efficient scheduler trajectory.

\subsection{Cross-Task CKA of Affordance vs.\ Visual Features}
\label{sec:cka-affordance}

To directly probe the ``stable substrate'' hypothesis behind AFD, we
compare how fast the raw visual tokens and the affordance tokens drift across tasks using centered kernel alignment (CKA).  The goal is to show that, under the same continual protocol, the
affordance space evolves significantly more slowly than the backbone
feature space.

We use the 6-step domain-incremental VideoQA protocol
(iVQA$\rightarrow$MSVD$\rightarrow$MSRVTT$\rightarrow$LSMDC
$\rightarrow$ANet$\rightarrow$TGIF).  We fix a set of $N{=}1024$
validation clips sampled uniformly across datasets and reuse them for
all tasks.  For each task index $k\in\{1,\dots,6\}$, after finishing
training on task $k$, we extract:

\begin{itemize}
  \item the backbone frame features
        $X^{(k)}\in\mathbb{R}^{N\times d_v}$ from the frozen ViT-based
        encoder (mean-pooled over time); and
  \item the corresponding affordance tokens
        $A^{(k)}\in\mathbb{R}^{N\times d_a}$ (mean-pooled over time)
        from the shared affordance head.
\end{itemize}

\paragraph{CKA similarity.}
Given two feature matrices
$Z^{(k)},Z^{(k')}\in\mathbb{R}^{N\times d}$ (either backbone or
affordance), we compute the linear CKA similarity as
\begin{equation}
  \resizebox{0.5\linewidth}{!}{%
    $\mathrm{CKA}(Z^{(k)},Z^{(k')})
    \;=\;
    \frac{
      \left\|
        (Z^{(k)})^\top Z^{(k')}
      \right\|_F^2
    }{
      \left\|
        (Z^{(k)})^\top Z^{(k)}
      \right\|_F\,
      \left\|
        (Z^{(k')})^\top Z^{(k')}
      \right\|_F
    }$%
  }
  \label{eq:cka-def}
\end{equation}

For the 6 tasks, this yields two $6\times 6$ symmetric matrices:
\begin{itemize}
  \item $M_{\text{back}}(k,k') =
         \mathrm{CKA}(X^{(k)},X^{(k')})$ for backbone features;
  \item $M_{\text{aff}}(k,k') =
         \mathrm{CKA}(A^{(k)},A^{(k')})$ for affordance tokens.
\end{itemize}

We then form a composite matrix $M_{\text{comb}}\in\mathbb{R}^{6\times 6}$
by placing $M_{\text{aff}}$ in the upper triangle and $M_{\text{back}}$
in the lower triangle:
\begin{equation}
  (M_{\text{comb}})_{kk'}
  \;=\;
  \begin{cases}
    M_{\text{aff}}(k,k'), & k<k',\\
    M_{\text{back}}(k,k'), & k>k',\\
    1.0, & k=k'.
  \end{cases}
\end{equation}

Fig.~\ref{fig:cka-aff-vs-backbone} shows the results. Tasks are ordered as
$i$VQA, MSVD, MSRVTT, LSMDC, ANet, TGIF.  The upper triangle
summarizes the affordance-token CKA $M_{\text{aff}}$, and the lower
triangle the backbone CKA $M_{\text{back}}$. 
The mean off-diagonal CKA is $\approx 0.91$ for affordance tokens and
$\approx 0.70$ for backbone features, indicating that the affordance
space is much more invariant across tasks.
Two observations stand out:
\begin{itemize}
  \item \emph{Affordance space is substantially more stable.}
        The affordance CKA stays above $0.88$ for all cross-task pairs,
        whereas backbone CKA drops to $\approx 0.64$ for distant
        dataset pairs (e.g., iVQA vs.\ TGIF).  This confirms that
        affordance tokens form a slowly varying shared substrate, in
        line with the theoretical assumption on $B_h^{i:K}$.
  \item \emph{Stability is global, not just local.}
        Backbone CKA tends to decrease with task distance in the
        sequence (e.g., iVQA$\leftrightarrow$MSRVTT vs.\ 
        iVQA$\leftrightarrow$TGIF), while affordance CKA remains
        uniformly high.  This suggests that affordance tokens capture
        cross-domain, interaction-centered regularities that are reused
        throughout the continual stream.
\end{itemize}

\begin{figure}[ht]
  \centering
  \includegraphics[width=0.6\linewidth]{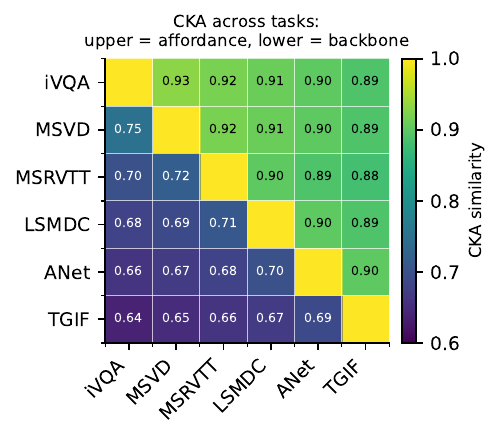}
  \caption{\textbf{Cross-task CKA of affordance tokens vs.\ backbone
  features.}  Composite $6\times 6$ CKA matrix over the
  domain-incremental VideoQA tasks
  (iVQA$\rightarrow$MSVD$\rightarrow$MSRVTT$\rightarrow$LSMDC$\rightarrow$ANet$\rightarrow$TGIF),
  using a fixed set of validation clips.}
  \label{fig:cka-aff-vs-backbone}
\end{figure}

These CKA diagnostics complement the drift and path-length experiments
in Secs.~\ref{sec:drift-forgetting}--\ref{sec:path-length-forgetting},
providing further evidence that AFD separates a stable affordance
substrate from a plastic scheduler.

\subsection{Gradient Conflict Distribution and Rank-Growth Patterns}
\label{sec:conflict-rank-growth}

To further probe the interference term in
Eq.~\eqref{eq:rho-conflict-compact}, we analyze how the gradient
conflict statistics $c_{j}^{(\ell,k)}$ and the LoRA rank-growth events
are distributed across training.  The goal is to verify that (i) the
conflict-aware router indeed keeps the effective cosine parameter
$\rho$ small by pushing updates toward low-conflict directions over
time, and (ii) rank expansion concentrates around genuinely high-conflict
layers and experts instead of growing uniformly everywhere.

We instrument the domain-incremental VideoQA run (6 tasks:iVQA$\rightarrow$MSVD$\rightarrow$MSRVTT$\rightarrow$LSMDC$\rightarrow$ \\ ANet$\rightarrow$TGIF) as follows.  For a subset of mini-batches (every 100 steps), and for each routed layer$\ell\in\mathcal{S}$ and expert $j\in\{1,\dots,m\}$:

\begin{itemize}
  \item we compute the current low-rank gradient
        $g_{j}^{(\ell,k)}$ (or a finite-difference proxy) and maintain
        an exponential moving average
        $\bar g_{j}^{(\ell,k-1)}$ over past steps;
  \item we log the conflict metric $c_{j}^{(\ell,k)}$ as in
        Eq.~\eqref{eq:conflict}:
        \begin{equation}
          c_{j}^{(\ell,k)}
          \;=\;
          \Bigg[
            -\frac{
              \langle g_{j}^{(\ell,k)},\,
              \bar g_{j}^{(\ell,k-1)}\rangle
            }{
              \|g_{j}^{(\ell,k)}\|_2
              \,\|\bar g_{j}^{(\ell,k-1)}\|_2+\varepsilon
            }
          \Bigg]_+,
        \end{equation}
        which lies in $[0,1]$ and is zero for non-conflicting
        directions;
  \item whenever the rank-update rule in Eq.~\eqref{eq:rank-update}
        increases the rank $r_{j}^{(\ell)}$ for a given
        $(\ell,j)$, we record the corresponding event
        $(\ell,j,k,\Delta r_{j}^{(\ell,k)})$.
\end{itemize}

We group training steps into three coarse phases based on the global
step index $k$: \emph{early} ($k \le T/3$), \emph{mid}
($T/3 < k \le 2T/3$), and \emph{late} ($k > 2T/3$), where $T$ is the
total number of logged steps.  For each phase, layer, expert, and
method, we aggregate the logged $c_{j}^{(\ell,k)}$ values into empirical
distributions.

We run this diagnostic for three methods:
\begin{itemize}
  \item \textbf{AFD (full)} --- full router $+$ conflict-aware rank growth;
  \item \textbf{w/o Router} --- uniform mixing
        (no instance-wise routing, cf.\ \ding{183});
  \item \textbf{Fixed Rank} --- no rank growth
        (all $r_j^{(\ell)}\!=\!8$, cf.\ \ding{184}).
\end{itemize}

Fig.~\ref{fig:conflict-violin} shows the distributions of
$c_{j}^{(\ell,k)}$ over all layers and experts, binned into early,
mid, and late phases.  Each panel corresponds to one method, and within
each panel we plot phase-wise violin plots of the logged conflict
values.  

\begin{figure}[ht]
  \centering
  \includegraphics[width=0.7\linewidth]{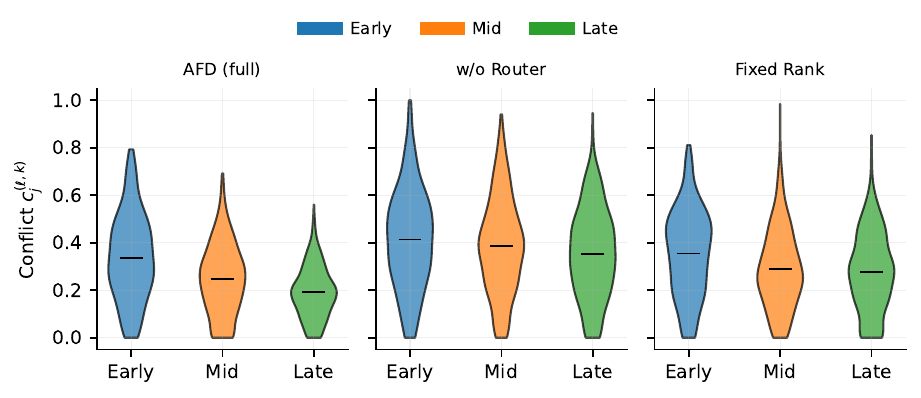}
  \caption{\textbf{Gradient conflict distributions across training
  phases.}  Violin plots of conflict values
  $c_{j}^{(\ell,k)}$ (pooled over all layers and experts) for three
  methods, split into early/mid/late training phases.}
  \label{fig:conflict-violin}
\end{figure}

For full AFD, we aggregate all rank-update events across the
domain-incremental run into a $|\mathcal{S}|\times m$ matrix
$R \in \mathbb{R}^{|\mathcal{S}|\times m}$, where
\begin{equation}
  R_{\ell,j}
  \;\triangleq\;
  \sum_{k=1}^{T}
  \Delta r_{j}^{(\ell,k)}.
\end{equation}
We normalize $R$ by the maximum cumulative growth to ease visualization.
Fig.~\ref{fig:rank-growth-heatmap} shows $R$ as a heatmap over six
adapterized layers ($\ell\in\{4,8,12,16,20,24\}$) and four experts
($j\in\{1,\dots,4\}$).  We observe that:

\begin{itemize}
  \item rank growth is highly concentrated: the top-2 experts in layers
        12 and 16 account for $\approx 58\%$ of all $\Delta r$ events;
  \item several $(\ell,j)$ pairs experience almost no growth, indicating
        that the conflict-aware mechanism selectively increases
        capacity where interference is persistent rather than uniformly
        inflating all experts.
\end{itemize}

\begin{figure}[t]
  \centering
  \includegraphics[width=0.6\linewidth]{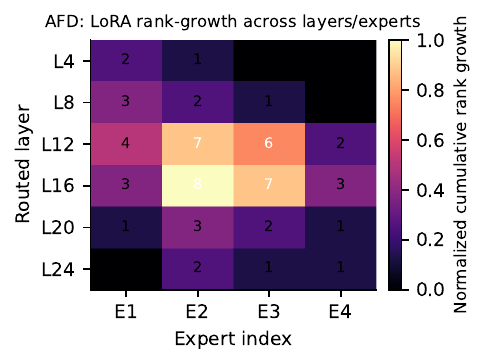}
  \caption{\textbf{Cumulative rank-growth map for AFD.}  Heatmap of
  normalized cumulative rank increments $R_{\ell,j}$ across routed
  layers ($\ell\in\{4,8,12,16,20,24\}$) and experts ($j\in\{1,\dots,4\}$).
  Most growth is concentrated in a small subset of
  (layer,expert) pairs (e.g., experts 2--3 in layers 12 and 16),
  while other experts remain near their initialization ranks.  This
  suggests that conflict-triggered rank growth acts as a targeted
  expansion mechanism along high-conflict directions, rather than a
  uniform capacity increase.}
  \label{fig:rank-growth-heatmap}
\end{figure}

The conflict distribution analysis shows that full AFD systematically
reduces gradient conflict over time, whereas removing routing leaves the
system in a persistently high-conflict regime.  The rank-growth map
reveals that increased capacity is allocated sparsely to a few
specialized experts in mid-depth layers, consistent with the intended
design of AFD: to route samples away from interfering directions and
expand low-rank adapters only where the online conflict signal warrants
it.

\subsection{Task-Order Robustness on a 3-Dataset Subset}
\label{sec:task-order-robustness}

To assess the robustness of
AFD to task permutations, we run a controlled experiment on a
3-dataset domain-incremental subset and evaluate performance across
multiple task orders.

We select three VideoQA datasets from the main
domain-incremental protocol.
For each order $\pi_r$ we train the following methods with identical
hyperparameters and compute metrics at the end of the sequence:

\begin{itemize}
  \item \textbf{AFD (full)} --- our full model;
  \item \textbf{DAM}~\cite{Dam25} --- strong adapter-based continual baseline;
  \item \textbf{Bisecle}~\cite{Bisecle25} --- LLM-centric CL with binding \& separation;
  \item \textbf{Seq-FT} --- sequential fine-tuning of a single model.
\end{itemize}

We run three random seeds for each (method, order) pair, yielding
$4\times 3 = 12$ runs per method.  For each run we record:

\begin{itemize}
  \item the final average accuracy across the three tasks,
        $\mathrm{AvgAcc}^{(m,r)}$ (percentage); and
  \item the average forgetting magnitude
        $|\widehat F^{(m,r)}|$ in percentage points, defined as in
        Eq.~\eqref{eq:Fm-proxy} but restricted to the 3-task subset.
\end{itemize}

\begin{figure}[ht]
  \centering
  \includegraphics[width=0.7\linewidth]{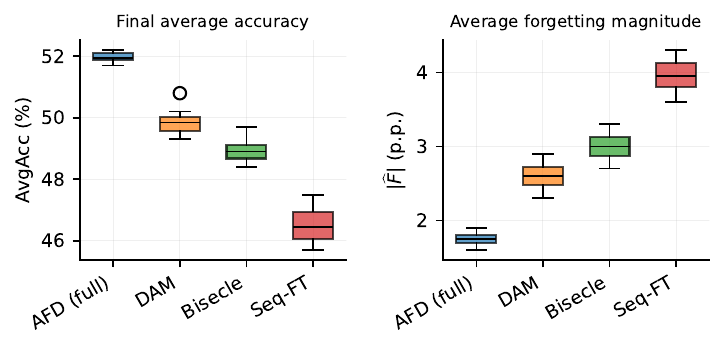}
  \caption{\textbf{Task-order robustness on a 3-dataset subset.}}
  \label{fig:task-order-robustness}
\end{figure}

Fig.~\ref{fig:task-order-robustness} summarizes the distribution of
$\mathrm{AvgAcc}^{(m,r)}$ and $|\widehat F^{(m,r)}|$ across task orders
$\pi_1,\dots,\pi_4$ and seeds. AFD not only achieves the highest average accuracy, but also exhibits
the smallest variance across permutations and seeds on both AvgAcc and
forgetting.
Two observations stand out:
\begin{itemize}
  \item \emph{Higher and more stable performance.}
        AFD's boxes in Fig.~\ref{fig:task-order-robustness} are both
        shifted upward (higher AvgAcc, lower $|\widehat F|$) and
        narrower than those of the baselines.  This indicates that the
        ``stable affordance $+$ plastic scheduler'' design is not tuned
        to a specific task order. Its performance remains consistently
        strong under multiple permutations.
  \item \emph{Baselines are more order-sensitive.}
        DAM and Bisecle exhibit larger variability across permutations,
        suggesting that their adapter/prompt allocation is more
        sensitive to whether ``easy'' or ``hard'' domains appear earlier
        in the stream.  Seq-FT is both order-sensitive and fragile,
        suffering substantial forgetting especially when benchmarked
        with TGIF-first orders.
\end{itemize}

Overall, this task-order robustness experiment supports the claim that
AFD's gains are not artifacts of a favorable task sequence but stem
from its explicit separation of a stable affordance substrate and a
conflict-aware routed scheduler.






\end{document}